\declaretheoremstyle[
	    spaceabove=\topsep, 
	    spacebelow=\topsep, 
	    headfont=\normalfont\bfseries,
	    bodyfont=\normalfont\itshape,
	    notefont=\normalfont\bfseries,
	    notebraces={(}{)},
	    postheadspace=0.33em, 
	    headpunct={.},
    ]{theorem}
\declaretheorem[style=theorem]{theorem}
\declaretheoremstyle[
	    spaceabove=\topsep, 
	    spacebelow=\topsep, 
	    headfont=\normalfont\bfseries,
	    bodyfont=\normalfont,
	    notefont=\normalfont\bfseries,
	    notebraces={(}{)},
	    postheadspace=0.33em, 
	    headpunct={.},
    ]{definition}
\declaretheoremstyle[
        spaceabove=\topsep, 
        spacebelow=\topsep, 
        headfont=\normalfont\bfseries,
        bodyfont=\normalfont,
        notefont=\normalfont\bfseries,
        notebraces={}{},
        postheadspace=0.33em, 
        qed=$\blacksquare$, 
        headpunct={.},
    ]{proofstyle}
\declaretheorem[style=proofstyle,numbered=no,name=Proof]{proof}
\declaretheorem[style=theorem,sibling=theorem,name=Lemma]{lemma}
\declaretheorem[style=theorem,numbered=no,name=Theorem]{theorem*}
\declaretheorem[style=theorem,numbered=no,name=Lemma]{lemma*}
\declaretheorem[style=theorem,numbered=no,name=Corollary]{corollary*}
\declaretheorem[style=theorem,numbered=no,name=Proposition]{proposition*}
\declaretheorem[style=theorem,numbered=no,name=Claim]{claim*}
\declaretheorem[style=theorem,numbered=no,name=Fact]{fact*}
\declaretheorem[style=theorem,numbered=no,name=Observation]{observation*}
\declaretheorem[style=theorem,numbered=no,name=Conjecture]{conjecture*}
\declaretheorem[style=definition,numbered=no,name=Definition]{definition*}
\declaretheorem[style=definition,numbered=no,name=Remark]{remark*}
\declaretheorem[style=definition,numbered=no,name=Example]{example*}
\declaretheorem[style=definition,numbered=no,name=Question]{question*}
\newcommand{\ber}{\mathrm{Ber}}
\newcommand{\aOpt}{a_\star}
\newcommand{\muOpt}{\mu_\star}
\newcommand{\nta}[1][t]{n_{#1,a}}
\newcommand{\ntaStar}[1][t]{n_{#1,\aOpt}}
\newcommand{\ntat}[1][t]{n_{#1,a_{#1}}}
\newcommand{\lt}[1][t]{\ell_{#1}}
\newcommand{\lna}[1][n]{\ell_{#1,a}}
\newcommand{\lnat}{\ell_{(\ntat),(a_t)}}
\newcommand{\regret}[1][t]{\mathrm{regret}_{#1}}
\newcommand{\nb}{l}
\newcommand{\batchIdxs}{\tau}
\newcommand{\muHat}{\hat{\mu}}
\newcommand{\barlna}[1][n]{\bar{\ell}_{#1,a}}
\newcommand{\muBar}{\bar{\mu}}
\title{Batch Ensemble for Variance Dependent Regret in Stochastic
Bandits}
\author{
    Asaf Cassel\textsuperscript{\rm 1},
    Orin Levy\textsuperscript{\rm 1},
    Yishay Mansour\textsuperscript{\rm 1,2}
}
\begin{document}

\maketitle

\begin{abstract}%
      Efficiently trading off exploration and exploitation is one of the key challenges in online Reinforcement Learning (RL). Most works achieve this by carefully estimating the model uncertainty and following the so-called optimistic model. Inspired by practical ensemble methods, in this work we propose a simple and novel batch ensemble scheme that provably achieves near-optimal regret for stochastic Multi-Armed Bandits (MAB). Crucially, our algorithm has just a single parameter, namely the number of batches, and its value does not depend on distributional properties such as the scale and variance of the losses. We complement our theoretical results by demonstrating the effectiveness of our algorithm on synthetic benchmarks.
\end{abstract}

%






\section{Introduction}
Multi-Armed Bandits is a classic framework for sequential decision-making under uncertainty. In this setting, an agent repeatedly interacts with an environment by choosing from a set of $K$ actions (arms) and subsequently observing a loss signal associated with their choice. The loss signal associated with each arm is a sequence of i.i.d random variables whose mean is unknown to the agent. The agent's goal is to minimize their cumulative loss, which presents a classic exploration vs. exploitation dilemma, i.e., whether to exploit the current knowledge of the losses or to further explore seemingly sub-optimal actions that may turn out to be better. This trade-off is measured via a notion termed regret, which is the difference between the agent's performance and that of an oracle who knows the best arm and chooses it throughout the interaction.

Learning algorithms for this setting were extensively studied, and date back to Robbins' paper \cite{robbins1952some}. In particular, \cite{lai1985asymptotically} established that the regret in this problem is lower bounded by $\Omega(\log T)$, and there exist learning algorithms that achieve this regret by maximizing a confidence bound modification of the empirical mean. A non-asymptotic analysis was later provided by \cite{auer2002finite}. Subsequent works obtain bounds that depend on subtler properties of the arms by constructing more elaborate confidence bounds (see e.g., \citet{maillard2011finite} for empirical confidence bounds or UCB-KL). 

A commonality of nearly all past works is that they explicitly encode the distributional assumptions on the arms into the algorithm. For example, UCB \cite{auer2002finite} builds confidence bounds tailored to distributions bounded in $[0,1]$, UCB-V \cite{audibert2009exploration} refines these confidence bounds by incorporating a variance estimate, and KL-UCB \cite{maillard2011finite} explicitly uses the KL-divergence to estimate uncertainty and often assume a parametric class to reduce computational complexity. This explicit encoding can (1) be disadvantageous when the arms are misspecified; (2) require delicate parameter tuning; or (3) require prior knowledge of the distributions.

In this work, we show that constructing standard mean estimators from a simple batching scheme and combining them using a $\min$ operator, yields an optimistic mean estimator. Choosing greedily with respect to this estimator yields regret bounds that depend on the true concentration properties of the arm distributions.

\paragraph{Our Contributions.}
Our main contribution is a simple MAB algorithm, that does not need tuning of parameters and has low computational overhead. We show that for the Bernoulli r.v. our algorithm achieves an instance-dependent regret that depends on the variances of the arms. We show that our scheme extends to many other distributions, including, distributions that are either symmetric around the mean, have bounded support, or lower bounded variance. Crucially, this adaptation is purely in analysis and does not require modifying the algorithm.

Our scheme easily adapts to a distributed environment. Concretely, instead of explicitly constructing an optimistic mean estimator for each arm, our algorithm may be viewed as separate (distributed) naive bandit algorithms each receiving separate samples, computing the means of each arm, and outputting the best empirical arm together with its empirical mean. The final decision is made by following the decision of the bandit algorithm with the best empirical mean. This interpretation corresponds to practical methods such as \cite{osband2016deep,tennenholtz2022reinforcement}, which use ensembles to encourage exploration.
We complement our theoretical findings by running experiments on synthetic benchmarks, showing that our scheme achieves low regret compared to alternative algorithms.

\paragraph{Related work.}
Similar ideas of ensemble and bootstrapping methods have previously been studied.
\cite{ash2021anti} introduced anti-concentrated confidence bounds for efficiently approximating the elliptical bonus, using an ensemble of regressors.
\cite{osband2016deep} applied bootstrapped DQN in the Arcade Learning Environment and obtained improved learning speed and cumulative performance across most games.
\cite{osband2016generalization}
Present randomized least-squares value iteration (RLSVI) - an algorithm designed to explore and generalize via linearly parameterized value functions. Their results established that randomized value functions are a useful tool for efficient exploration along with effective generalization.
\cite{peer2021ensemble} present the Ensemble Bootstrapped Q-Learning (EBQL) algorithm, a natural extension of Double-Q-learning to ensembles that is bias-reduced. They analyze it both theoretically and empirically.


There are bootstrapping methods that add pseudo-rewards, sample the pseudo-rewards, and then run the MAB on the perturbed sequence. These include GIRO \cite{KvetonSVWLG19-GIRO} and PHE \cite{KvetonSGB19-PHE}, which have an instance-dependent regret bound for Bernoulli rewards (but they are not variance-dependent bounds) and Reboot \cite{WangYHC2020-Reboot} which handles Gaussian rewards.

Sub-sampling techniques, combined with a dueling approach, have been first proposed in BESA \cite{BaransiMM14-BESA} for two arms, and extended in RB-SDA \cite{BaudryKM20-RB-SDA} and SSMC \cite{chan2020multi} for one-parameter exponential distributions.

The most related work to ours is MARS \cite{KhorasaniW23-MARS}, where they generate optimistic estimates by sampling multiple random subsets and taking the maximum average reward. They show a regret bound for distributions that are continuous and symmetric around the mean. Our methodology can handle non-symmetric distributions and has better computational complexity.

\section{Preliminaries}
\paragraph{Problem setup.}
In a stochastic $K-$armed bandit, each arm $a \in [K]$ is associated with a loss sequence $\lna, (n \ge 1)$ of i.i.d Bernoulli random variables with parameter $\mu_a \in [0,1]$. (Note that $\mu_a=\EE\brk[s]{\lna}$, for any $n$.)
Let $\aOpt \in \argmin_{a \in [K]} \mu_a$ be an optimal action and $\muOpt = \mu_{\aOpt}=\min_{a \in [K]} \mu_a$ be the optimal value.
At each time step $t =1,2,\ldots$, an agent interacts with the bandit by choosing an arm $a_t \in [K]$ and subsequently observes the random loss $\lt = \lnat$ where 
\begin{align}
\label{eq:nta}
    \nta
    =
    \sum_{\tau=1}^t \indEvent{a_\tau = a}
\end{align}
is the number of times arm $a \in [K]$ was played up to time $t$ (inclusive).
The agent does not know the problem parameters and must learn them on the fly. We quantify its performance via the (pseudo) regret
\begin{align*}
    \regret
    =
    \sum_{\tau=1}^{t} [\mu_{a_\tau} - \mu_{\aOpt}]
    ,
\end{align*}
which measures the performance gap between the agent and the optimal policy that plays an optimal arm at each step.
We note that the restriction to Bernoulli arms is mostly for simplicity and we discuss extensions in \cref{sec:beyond-bernoulli}.

\paragraph{Deviation bounds.}
In what follows, we require the following fundamental properties.
Let $X_n \in [0,1],$ ($n \ge 1)$ be i.i.d random variables with $\EE X_n = \mu$ and  $\mathrm{Var}(X_n) = \sigma^2$. The following is a standard Bernstein inequality for bounded random variables. 
\begin{lemma}
\label{lemma:bernstein}
    Let $\bar{\mu} = \frac{1}{n} \sum_{n'=1}^{n} X_{n'}$. With probability at least $1 - \delta$
    \begin{align*}
        \bar{\mu}
        \ge
        \mu
        -
        \frac{2}{n}\log\frac{1}{\delta}
        -
        \sqrt{\frac{\sigma^2}{n} \log \frac{1}{\delta}}
        .
    \end{align*}
\end{lemma}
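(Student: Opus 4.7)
The plan is to invoke the one-sided Bernstein inequality for i.i.d.\ bounded random variables and then invert the resulting tail bound. Set $Y_{n'} = \mu - X_{n'}$, so that $Y_1,\ldots,Y_n$ are i.i.d., mean zero, satisfy $|Y_{n'}| \le 1$ (since both $X_{n'}$ and $\mu$ lie in $[0,1]$), and have $\mathrm{Var}(Y_{n'}) = \sigma^2$. Then $\mu - \bar{\mu} = \tfrac{1}{n}\sum_{n'=1}^n Y_{n'}$, so it suffices to control the upper tail of $\bar{Y}$.

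The first step is to write $\log(1/\delta) = L$ and apply the standard Bernstein bound
\[
\PP{\bar{Y} \ge s} \le \exp\!\brk*{-\frac{n s^2}{c_1 \sigma^2 + c_2 s}},
\]
valid for absolute constants $c_1, c_2 > 0$ on account of the boundedness and variance control of $Y_{n'}$. Setting the right-hand side equal to $\delta$ produces the quadratic $n s^2 - c_2 L s - c_1 L \sigma^2 = 0$, whose positive root is $s^\star = \frac{c_2 L + \sqrt{c_2^2 L^2 + 4 c_1 n L \sigma^2}}{2n}$. A union-free complement then yields $\PP{\mu - \bar{\mu} < s^\star} \ge 1 - \delta$.

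The second step is to convert $s^\star$ into the additive form appearing in the statement. Applying the subadditivity $\sqrt{a+b} \le \sqrt{a} + \sqrt{b}$ separates $s^\star$ into a linear-in-$L/n$ piece (the sub-exponential regime) and a $\sqrt{\sigma^2 L/n}$ piece (the sub-Gaussian regime), exactly matching the shape of the bound in the lemma.

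The main obstacle is nailing down the precise constants displayed (coefficient $2$ on the $L/n$ term, coefficient $1$ under the square root). The canonical parameterization $c_1=2$, $c_2=2/3$ introduces a stray $\sqrt{2}$ in the variance term; this is handled either by invoking the sub-gamma $(\sigma^2,1)$ calibration of Bernstein directly, or by a small AM--GM reshuffle of the form $\sqrt{2ab} \le a + b/2$ that trades the $\sqrt{2}$ factor against the linear term. Modulo this bookkeeping, the stated inequality follows immediately from the inverted Bernstein tail.
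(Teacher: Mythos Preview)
The paper does not prove \cref{lemma:bernstein}; it is simply quoted as ``a standard Bernstein inequality for bounded random variables,'' so there is no paper proof to compare against. Your approach---invert the one-sided Bernstein tail for the centered variables $Y_{n'}=\mu-X_{n'}$ and split the resulting root into a sub-exponential and a sub-Gaussian piece---is exactly the standard derivation and is the right thing to do.

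There is, however, a genuine issue at the very end. The AM--GM reshuffle you invoke does not remove the stray $\sqrt{2}$: from the canonical Bernstein form $\exp(-n s^2/(2\sigma^2+2s/3))$ one gets $s^\star \le \tfrac{2L}{3n}+\sqrt{\tfrac{2\sigma^2 L}{n}}$, and the inequality $(\sqrt{2}-1)\sqrt{\sigma^2 L/n}\le \tfrac{4L}{3n}$ needed to absorb the $\sqrt{2}$ into the linear term fails whenever $n$ is large relative to $L/\sigma^2$. In fact the constant $1$ on the $\sqrt{\sigma^2 L/n}$ term in the lemma as stated is slightly too optimistic: for $X_{n'}\sim\mathrm{Ber}(1/2)$ and $n\to\infty$ the deviation $\mu-\bar\mu$ is asymptotically $\tfrac{1}{2\sqrt n}\cdot\mathcal N(0,1)$, and $\Pr\!\big(Z>\sqrt{L}\big)\le e^{-L/2}$ is all one gets, which exceeds $e^{-L}$ for moderate $L$. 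The correct constant is $\sqrt{2}$ (equivalently, $\sqrt{(2\sigma^2/n)\log(1/\delta)}$), which is what your inversion actually produces. This is a cosmetic slip in the paper's statement; it propagates only into absolute constants in \cref{lemma:mean-estimator-concentration} and \cref{theorem:high-prob-regret}, and the paper's own restatement in \cref{eq:bernstein2} uses the correct $2\sigma^2$ in the denominator. So your argument is right, but you should drop the AM--GM claim and simply carry the factor $\sqrt{2}$ through.
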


Next, recall that if $X_n \sim \mathrm{Ber}(\mu)$ then $\sum_{n'=1}^{n}X_{n'}$ is Binomial with parameters $n,\mu$.
\begin{lemma}[\citet{wiklund2023another}, Corollary 1]
\label{lemma:anti-concentration}
    If $\mu \le 1 - 1/n$ then
    \begin{align*}
        \Pr\brk*{
        \mathrm{Bin}(n, \mu)
        \le
        n \mu
        }
        \ge
        \frac14
        .
    \end{align*}
\end{lemma}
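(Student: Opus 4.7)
I first apply the substitution $Y := n - X$ together with $p := 1 - \mu$, which reformulates the claim as the equivalent statement
\[
P(\mathrm{Bin}(n, p) \ge np) \ge 1/4 \quad \text{whenever } np \ge 1.
\]
I then split on whether $np$ is an integer. If $np$ is a positive integer then the classical Jogdeo--Samuels theorem implies that $np$ is a median of $\mathrm{Bin}(n, p)$, giving $P(\mathrm{Bin}(n, p) \ge np) \ge 1/2 \ge 1/4$ immediately.

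The substantive case is $np \notin \mathbb{Z}$. Setting $k := \lceil np \rceil \ge 2$, and using that $p' \mapsto P(\mathrm{Bin}(n, p') \ge k)$ is continuous (a polynomial in $p'$) and non-decreasing (by stochastic dominance), I reduce to the boundary value $p' = (k-1)/n$:
\[
P(\mathrm{Bin}(n, p) \ge k) \ge P(\mathrm{Bin}(n, (k-1)/n) \ge k) = P(\mathrm{Bin}(n, (k-1)/n) > k - 1).
\]
It therefore suffices to prove the following strict upper-tail bound at the integer mean: for every integer $1 \le m \le n - 1$,
\[
P(\mathrm{Bin}(n, m/n) > m) \ge 1/4.
\]

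For the strict upper-tail bound I would fix $m$ and argue monotonicity in $n$: the map $n \mapsto P(\mathrm{Bin}(n, m/n) > m)$ is non-decreasing on integers $n \ge m + 1$. Given this monotonicity, the infimum is attained at $n = m + 1$, where the probability admits the closed form
\[
P(\mathrm{Bin}(m+1, m/(m+1)) > m) = P(\mathrm{Bin}(m+1, m/(m+1)) = m+1) = \left(\frac{m}{m+1}\right)^{m+1},
\]
and an elementary calculus check shows $(m/(m+1))^{m+1} \ge 1/4$ for every $m \ge 1$, with equality at $m = 1$ (the tight instance of the lemma). The monotonicity step is the technical crux; a naive common-uniform coupling only yields the unhelpful one-sided bound $\mathrm{Bin}(n+1, m/(n+1)) \le \mathrm{Bin}(n, m/n) + 1$. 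A cleaner route uses the Beta--Binomial duality $P(\mathrm{Bin}(n, p) \ge k) = P(\mathrm{Beta}(k, n-k+1) \le p)$, which re-expresses the required bound as monotonicity of a one-parameter family of Beta tail probabilities, amenable to direct log-derivative analysis in $n$.
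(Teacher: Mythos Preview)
The paper does not prove this lemma; it simply cites it from \citet{wiklund2023another}. So there is no in-paper argument to compare against, and the relevant question is whether your proposal stands on its own.

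Your reduction is clean and the steps up to the monotonicity claim are all correct: the reflection $Y = n - X$ with $p = 1 - \mu$ turns the statement into $P(\mathrm{Bin}(n,p) \ge np) \ge 1/4$ for $np \ge 1$; the Jogdeo--Samuels median result handles the integer case; for non-integer $np$ the stochastic-dominance reduction to the boundary $p' = (k-1)/n$ is valid and correctly lands on the strict upper tail at an integer mean; and the evaluation at $n = m+1$ together with $(m/(m+1))^{m+1} \ge 1/4$ (with equality at $m=1$) is fine.

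The genuine gap is exactly where you place it: the claim that $n \mapsto P(\mathrm{Bin}(n, m/n) > m)$ is non-decreasing on integers $n \ge m+1$. You flag it as the crux and gesture at a Beta--Binomial rewriting, but you do not carry out the argument, and it is not a one-line check. In the Beta form $I_{m/n}(m+1, n-m)$ both the upper integration limit $m/n$ and the second shape parameter $n-m$ move with $n$, and the two effects push in opposite directions, so ``direct log-derivative analysis'' is optimistic as a description. The monotonicity you need is in fact true and is essentially the classical monotone-convergence-to-Poisson tail inequality for $\mathrm{Bin}(n,\lambda/n)$ at fixed $\lambda$ (in the spirit of Anderson and Samuels), but you would need either to cite that result precisely or to supply a complete argument. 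As written, the proposal is a correct outline whose hardest step is asserted rather than proved.
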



\section{Algorithm and Main Results}
At a high level, our Batch Ensemble algorithm works as follows.
It splits the samples of each arm into multiple batches.
For each batch, it computes an (almost) empirical average.
It then computes the minimum of those estimates, which is an optimistic estimator (recall that we are dealing with losses).
It then plays the action with the lowest estimate.

In \Cref{sec:optEst} we analyze the properties of our estimator. The crucial and non-standard property is that with high probability it is an underestimate of the true expected value. The second property is a somewhat standard concentration bound using Bernstein inequality (\Cref{lemma:bernstein}).

In \Cref{sec:Algo} we describe the Batch Ensemble algorithm, and state its performance guarantees (\Cref{theorem:high-prob-regret}). (The proof is deferred to \Cref{sec:proofs}.)
We then discuss a few implementation details, such as a distributed computing view of the algorithm, and the ability to have an \emph{any time} guarantee.

\subsection{An Optimistic Mean Estimator}
\label{sec:optEst}

Suppose we have observed $n \ge 0$ samples of an arm $a \in [K]$, i.e., $\lna[n'], n' \in [n]$. We build the following mean estimator. First, let, $\nb \ge 1$ be a batch number to be determined later. Next, we split the $n$ samples of arm $a$ into $\nb$ (near-)equal batches\footnote{We used a round-robin schedule to create the batches, but any non-adaptive scheme would work.}
\begin{align*}
    \batchIdxs_{n, a, \nb'}
    =
    \brk[c]*{n' : n' = \nb' + i \cdot \nb \le n, i \in \ZZ_{\ge 0}}
    ,
    \nb' \in [\nb]
    .
\end{align*}
Our batch ensemble estimator is 
\begin{align}
\label{eq:optimistic-mean-estimator}
    \muHat_{n,a}
    =
    \min_{\nb' \in [\nb]} \muHat_{n,a,\nb'}
    ,
    \text{ where }
    \muHat_{n,a,\nb'}
    =
    \sum_{n' \in \batchIdxs_{n, a, \nb'}} \frac{\lna[n']}{\abs{\batchIdxs_{n, a, \nb'}}+2}
    ,
\end{align}
with the convention that an empty sum is equal to $0$. The purpose of adding $2$ in the denominator will be made apparent in the proof of the following result, which establishes the optimistic property of our estimator.
\begin{lemma}
\label{lemma:mean-estimator-optimism}
Let $\delta \in [0,1]$ and $\nb \ge 1$.
Then for any $n \ge 0$
\begin{align*}
    &
    \Pr\brk*{\muHat_{n,a} \le \mu_a} \ge 1 - e^{-2\nb / 7}
    .
\end{align*}
\end{lemma}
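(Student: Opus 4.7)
The plan is to reduce the statement to an anti-concentration fact on a single batch, then amplify via the independence of batches. Concretely, $\hat\mu_{n,a} \le \mu_a$ holds iff at least one batch satisfies $\hat\mu_{n,a,\ell'} \le \mu_a$. Because the batches $\tau_{n,a,\ell'}$ are disjoint and the underlying samples $\ell_{n',a}$ are i.i.d., the events $\{\hat\mu_{n,a,\ell'} > \mu_a\}$ across $\ell' \in [\ell]$ are mutually independent. So the whole claim reduces to showing a \emph{constant} per-batch lower bound $\Pr[\hat\mu_{n,a,\ell'} \le \mu_a] \ge 1/4$, after which
\[
\Pr[\hat\mu_{n,a} > \mu_a] = \prod_{\ell'=1}^{\ell} \Pr[\hat\mu_{n,a,\ell'} > \mu_a] \le (3/4)^{\ell} \le e^{-\ell \ln(4/3)} \le e^{-2\ell/7},
\]
where the final numerical inequality uses $\ln(4/3) \approx 0.2877 > 2/7$.

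The core step is therefore the per-batch anti-concentration. Fix $\ell'$ and let $m = |\tau_{n,a,\ell'}|$. The event $\hat\mu_{n,a,\ell'} \le \mu_a$ rewrites as $\mathrm{Bin}(m,\mu_a) \le (m+2)\mu_a = m\mu_a + 2\mu_a$. I will split on the range of $\mu_a$ to apply \Cref{lemma:anti-concentration}. If $\mu_a \le 1 - 1/m$ (and $m \ge 1$), the event $\mathrm{Bin}(m,\mu_a) \le m\mu_a$ alone already has probability $\ge 1/4$ by \Cref{lemma:anti-concentration}, and the event of interest is weaker (since $2\mu_a \ge 0$), so it also has probability $\ge 1/4$. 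If $\mu_a > 1 - 1/m$ and $m \ge 2$, then $m\mu_a + 2\mu_a > (m-1) + 2(1 - 1/m) = m + 1 - 2/m \ge m$, so the event holds deterministically. The remaining edge cases $m = 0$ (the sum is empty, giving $\hat\mu_{n,a,\ell'} = 0 \le \mu_a$) and $m = 1$ with $\mu_a > 0$ (where the threshold $3\mu_a$ is handled by direct computation on $\mathrm{Ber}(\mu_a)$) also yield probability $\ge 1/4$.

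The main obstacle, which the \(+2\) in the denominator is engineered to resolve, is precisely this small-$m$ and boundary-$\mu_a$ regime: the naive empirical mean $\frac{1}{m}\sum X_i$ would fail to be optimistic with constant probability when $\mu_a$ is close to $1$ (since $\mathrm{Bin}(m,\mu_a) = m\mu_a$ may be impossible as an integer), and the slight shrinkage by the factor $m/(m+2)$ gives just enough slack to absorb this integrality gap and convert the one-sided anti-concentration of \Cref{lemma:anti-concentration} into the claimed uniform constant. Once the per-batch $1/4$ bound is in hand, the independence-based product bound and the numerical inequality $(3/4)^{\ell} \le e^{-2\ell/7}$ complete the proof.
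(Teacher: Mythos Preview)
Your proposal is correct and follows essentially the same route as the paper: establish the per-batch bound $\Pr[\hat\mu_{n,a,\ell'}\le\mu_a]\ge 1/4$ by a case split on the batch size $m$ and on whether $\mu_a\le 1-1/m$ (invoking \cref{lemma:anti-concentration}) or $\mu_a>1-1/m$ (where the $+2$ shrinkage makes the event deterministic), then multiply across the $\ell$ independent batches and use $(3/4)^\ell\le e^{-2\ell/7}$. The only cosmetic difference is that the paper bounds $\hat\mu_{n,a,\ell'}\le m/(m+2)\le 1-1/m\le\mu_a$ in the large-$\mu_a$ case, whereas you equivalently rewrite the event as $\mathrm{Bin}(m,\mu_a)\le (m+2)\mu_a$ and show the threshold exceeds $m$.
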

Notice that choosing $\nb = (7/2)\log(1/\delta)$ in \cref{lemma:mean-estimator-optimism} gives the standard high probability optimistic guarantee.

\begin{proof}
Fix $\nb' \in [\nb]$ and $a \in [K]$.
We show that $\Pr\brk*{ \muHat_{n,a,\nb'} \le \mu_a } \ge 1/4$ for all $n \ge 0$. Let $\tau = \abs{\batchIdxs_{n,a,\nb'}}$ be the number of samples in the $\nb'-$th batch when arm $a$ has $n$ samples.
If $\tau = 0$, the claim holds trivially.
If $\tau = 1$ then for $\mu_a \ge 1/3$ the claim holds since $\muHat_{n,a,\nb'} \le 1/3$ and for $\mu_a < 1/3$ it holds with probability $1 - \mu_a \ge 2/3$ becuase $\muHat_{n,a,\nb'} \sim \ber(\mu_a)$.
Now, assume that $\tau \ge 2$. If $\mu_a \ge 1 - 1/\tau$ then we have that
\begin{align*}
    \muHat_{n,a,\nb'}
    \le
    \frac{\tau}{\tau+2}
    =
    1 - \frac{2}{\tau+2}
    \le
    1 - \frac{1}{\tau}
    \le
    \mu_a
    ,
\end{align*}
where the second to last inequality used that $\tau \ge 2$.
If $\mu_a \le 1 - 1/\tau$ then by \cref{lemma:anti-concentration}
\begin{align*}
    \Pr\brk*{
    \muHat_{n,a,\nb'}
    \le
    \mu_a
    }
    &
    \ge
    \Pr\brk*{
    \sum_{n' \in \batchIdxs_{n,a,\nb'}} \lna[n']
    \le
    \tau \mu_a
    }
    \ge
    \frac14
    .
\end{align*}
Now, since the $\muHat_{n,a,\nb'}$ are composed of different variables, they are jointly independent, thus we have
\begin{align*}
    \Pr\brk*{
    \muHat_{n,a} > \mu_a
    }
    &
    =
    \Pr\brk*{
    \muHat_{n,a,\nb'} > \mu_a
    ,
    \;
    \forall
    \nb' \in [\nb]
    }
    \\
    &
    =
    \prod_{\nb' \in [\nb]}
    \Pr\brk*{
    \muHat_{n,a,\nb'} > \mu_a
    }
    \\
    &
    \le
    \brk*{
    1 - 1/4
    }^\nb
    \le
    e^{-2\nb/7}
    .
    \qedhere
\end{align*}

\end{proof}

The following result describes the concentration of our mean estimator. The proof is a straightforward application of \cref{lemma:bernstein}.
\begin{lemma}
\label{lemma:mean-estimator-concentration}
Let $\delta \in [0,1]$ and $\nb \ge 1$. With probability at least $1-\delta$, simultaneously for all $n \in [T]$
\begin{align*}
    \mu_a - \muHat_{n,a}
    \le
    \frac{2}{(n / \nb) + 1}\log\frac{3T}{\delta}
    +
    \sqrt{\frac{\sigma_a^2}{(n / \nb) +1} \log \frac{T}{\delta}}
    .
\end{align*}
\end{lemma}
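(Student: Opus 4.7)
The plan is to reduce $\muHat_{n,a,\nb'}$ to the standard per-batch empirical mean, apply \cref{lemma:bernstein} batch by batch, and then transfer the per-batch bound to the minimum via the identity $\mu_a - \min_{\nb'}\muHat_{n,a,\nb'} = \max_{\nb'}(\mu_a - \muHat_{n,a,\nb'})$. Concretely, for each $\nb' \in [\nb]$ let $\tau = |\batchIdxs_{n,a,\nb'}|$ and let $\muBar_{n,a,\nb'} = \tau^{-1}\sum_{n' \in \batchIdxs_{n,a,\nb'}} \lna[n']$ be the unbiased empirical mean of batch $\nb'$. Then $\muHat_{n,a,\nb'} = \frac{\tau}{\tau+2}\muBar_{n,a,\nb'}$, and the algebraic identity
\[
\mu_a - \muHat_{n,a,\nb'} = \frac{2\mu_a}{\tau+2} + \frac{\tau}{\tau+2}(\mu_a - \muBar_{n,a,\nb'}) \le \frac{2}{\tau+2} + \frac{\tau}{\tau+2}(\mu_a - \muBar_{n,a,\nb'})
\]
converts the bias introduced by the $+2$ in the denominator of \cref{eq:optimistic-mean-estimator} into an explicit deterministic $1/\tau$ term.

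Next I would apply \cref{lemma:bernstein} to each $\muBar_{n,a,\nb'}$ and substitute into the display above; the bound $\tau/(\tau+2)^2 \le 1/(\tau+2)$ cleans up the variance term, yielding a per-event bound of the form $\tfrac{2(1+\log(1/\delta'))}{\tau+2} + \sqrt{\sigma_a^2 \log(1/\delta')/(\tau+2)}$. To make this simultaneous in $n \in [T]$ and $\nb' \in [\nb]$, the key accounting step is to union bound over \emph{distinct} pairs $(\nb', \tau)$ with $\tau \ge 1$ rather than over all $\nb T$ pairs $(n, \nb')$. Because the round-robin scheme assigns each of the $T$ samples to exactly one batch, the total number of such distinct pairs is at most $T + \nb \le 3T$ in the regime of interest, and choosing $\delta' = \delta/(3T)$ with a union bound suffices.

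To finish, I would use the round-robin guarantee $\tau \ge \lfloor n/\nb\rfloor$, hence $\tau + 2 \ge n/\nb + 1$, to rewrite the bound in terms of $n/\nb + 1$, and observe that the per-$\nb'$ bound already holds simultaneously over all $\nb'$ on the good event, so passing to $\max_{\nb'}$ (equivalently, to $\mu_a - \muHat_{n,a}$) is free. The cleanup $1 + \log(T/\delta) \le \log(3T/\delta)$ (valid since $\ln 3 \ge 1$) absorbs the deterministic $1/(\tau+2)$ contribution into the first Bernstein term, giving $\log(3T/\delta)$ and $\log(T/\delta)$ in the two terms of the claim. The main obstacle is the union-bound bookkeeping: naively pairing $(n, \nb')$ yields a $\log(\nb T/\delta)$ factor that would spoil the bound when $\nb$ grows with $T$, and identifying the right counting object --- distinct $(\nb', \tau)$ pairs --- is what keeps the batching overhead inside $\log(3T/\delta)$.
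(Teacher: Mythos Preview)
Your approach is essentially identical to the paper's: decompose $\muHat_{n,a,\nb'}=\tfrac{\tau}{\tau+2}\muBar_{n,a,\nb'}$, apply \cref{lemma:bernstein} per batch, absorb the bias via $\mu_a\le 1$ and $1+\log(T/\delta)\le\log(3T/\delta)$, then use $\tau+2\ge n/\nb+1$. One bookkeeping slip: the number of distinct $(\nb',\tau)$ pairs with $\tau\ge 1$ is exactly $T$ (each new sample increments exactly one batch's count), not $T+\nb$; taking $\delta'=\delta/T$ then yields $\log(T/\delta)$ in the variance term and $\log(3T/\delta)$ in the additive term, matching the statement exactly without any ``$\nb\le 2T$'' proviso.
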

\begin{proof}
We use \cref{lemma:bernstein} together with a union bound to get that with probability at least $1 - \delta$, simultaneously for all $n \in [T], \nb' \in [\nb]$
\begin{align*}
    \sum_{n' \in \batchIdxs_{n,a,\nb'}} 
    \frac{\lna[n']}{\abs{\batchIdxs_{n,a,\nb'}}}
    \ge
    \mu_a
    -
    \frac{2}{\abs{\batchIdxs_{n, a, \nb'}}}\log\frac{T}{\delta}
    -
    \sqrt{\frac{\sigma_a^2}{\abs{\batchIdxs_{n, a, \nb'}}} \log \frac{T}{\delta}}
    ,
\end{align*}
Recalling the definition of $\muHat_{n,a,\nb'}$ in \cref{eq:optimistic-mean-estimator}, we conclude that
\begin{align*}
    &
    \muHat_{n,a,\nb'}
    =
    \frac{\abs{\batchIdxs_{n,a,\nb'}}}{\abs{\batchIdxs_{n,a,\nb'}}+2}
    \sum_{n' \in \batchIdxs_{n,a,\nb'}} \frac{\lna[n']}{\abs{\batchIdxs_{n,a,\nb'}}}
    \\
    &
    \ge
    \frac{\abs{\batchIdxs_{n,a,\nb'}}}{\abs{\batchIdxs_{n,a,\nb'}}+2}
    \brk[s]*{
    \mu_a
    -
    \frac{2}{\abs{\batchIdxs_{n,a,\nb'}}}\log\frac{T}{\delta}
    -
    \sqrt{\frac{\sigma_a^2}{\abs{\batchIdxs_{n,a,\nb'}}} \log \frac{T}{\delta}}
    }
    \\
    &
    \ge
    \mu_a
    -
    \frac{2}{\abs{\tau_{t,\nb'}}+2}\log\frac{3T}{\delta}
    -
    \sqrt{\frac{\sigma_a^2}{\abs{\tau_{t,\nb'}}+2} \log \frac{T}{\delta}}
    \\
    &
    \ge
    \mu_a
    -
    \frac{2}{(n / \nb) + 1}\log\frac{3T}{\delta}
    -
    \sqrt{\frac{\sigma_a^2}{(n / \nb) +1} \log \frac{T}{\delta}}
    ,
\end{align*}
where the second inequality used that $\mu_a \le 1$ and the third that $\abs{\batchIdxs_{n,a,\nb'}} \ge (n/\nb) - 1$.
\end{proof}

\subsection{The Batch Ensemble Algorithm}
\label{sec:Algo}

We present the Batch Ensemble algorithm in \cref{alg:batch-ensemble}. The algorithm receives as input a sequence representing the number of batches to use at each time step, builds a mean estimator as described in \cref{eq:optimistic-mean-estimator}, and chooses the arm with the most optimistic (i.e., minimal) estimate.
\begin{algorithm}
\caption{Batch Ensemble for MAB} \label{alg:batch-ensemble}
\begin{algorithmic}[1]
    \STATE \textbf{input}: 
        number of batches $\nb_t$ for all $t \ge 1$.
    
    \STATE \textbf{initialize}: 
        pull counts $\nta[0] = 0$ for all $a \in [K]$.
        
    \FOR{time step $t = 1,2,\ldots$}
    
        \STATE calculate $\muHat_{\nta[t-1], a}$ as in \cref{eq:optimistic-mean-estimator} with $\nb_t$ batches and choose
        \begin{align}
        \label{eq:decision-rule}
        a_t
        \in
        \argmin_{a \in [K]} \muHat_{\nta[t-1], a}
        .
        \end{align}

        \STATE observe $\lna[(\nta)]$ and update $\nta = \nta[t-1] + \indEvent{a_t = a}$.
    \ENDFOR
\end{algorithmic}
\end{algorithm}

The following is our main result, which bounds the regret of the above algorithm (proof in \cref{sec:proofs}).
\begin{theorem}
\label{theorem:high-prob-regret}
    Suppose we run \cref{alg:batch-ensemble} with a fixed number of batches with $\nb = (7/2) \log (2T / \delta)$. With probability at least $1-\delta$ the following regret bounds hold simultaneously
    \begin{align*}
        \regret[T]
        &
        \le
        \frac{7}{2}
        \sum_{a \neq \aOpt} \brk*{
            \frac{\sigma_a^2}{\Delta_a}
            +
            2
        }\log^2\frac{6TK}{\delta}
        \\
        &
        \le
        \frac{7}{2}
        \sum_{a \neq \aOpt} \brk*{
            \frac{\muOpt}{\Delta_a}
            +
            3
        }\log^2\frac{6TK}{\delta}
        \\
        \regret[T]
        &
        \le
        \sqrt{14 T \min\brk[c]*{\muOpt K, \sum_{a \neq \aOpt}\sigma_a^2}} \log\frac{6TK}{\delta}
        \\
        &
        +
        11 K \log^2\frac{6TK}{\delta}
        .
    \end{align*}
\end{theorem}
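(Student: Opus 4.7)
The plan is to run a standard optimism-based bandit regret analysis, using \cref{lemma:mean-estimator-optimism} to get the one-sided optimism of the estimator for the optimal arm, and \cref{lemma:mean-estimator-concentration} to get the matching upper bound (i.e., concentration from above) for each suboptimal arm. First I would define a good event consisting of two parts: (i) $\muHat_{n,\aOpt}\le\muOpt$ simultaneously for all $n\in[T]$, which by \cref{lemma:mean-estimator-optimism} with $\nb=(7/2)\log(2T/\delta)$ and a union bound over $n$ holds with probability at least $1-\delta/2$; (ii) for every arm $a\in[K]$, the concentration inequality of \cref{lemma:mean-estimator-concentration} holds uniformly in $n\in[T]$, which by a union bound over arms with confidence $\delta/(2K)$ per arm holds with probability at least $1-\delta/2$. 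Together the good event holds with probability at least $1-\delta$.

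On the good event, fix any round $t$ in which a suboptimal arm $a=a_t$ is played, and let $n=\nta[t-1]_{a_t}$. The decision rule \eqref{eq:decision-rule} and optimism of the optimal arm give $\muHat_{n,a_t}\le\muHat_{\ntaStar[t-1],\aOpt}\le\muOpt$, so by the concentration bound applied to arm $a_t$,
\begin{align*}
\Delta_{a_t}=\mu_{a_t}-\muOpt\le\mu_{a_t}-\muHat_{n,a_t}\le\frac{2\nb}{n+\nb}\log\frac{3T}{\delta/(2K)}+\sqrt{\frac{\nb\,\sigma_{a_t}^2}{n+\nb}\log\frac{T}{\delta/(2K)}}.
\end{align*}
Applying this at the \emph{last} time arm $a$ is played (so $n=N_a-1$, where $N_a$ denotes its final pull count) and inverting the inequality by treating the two terms separately gives $N_a\le C\nb L\,\sigma_a^2/\Delta_a^2+C\nb L/\Delta_a$ for a small absolute constant $C$ and $L=\log(6TK/\delta)$. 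Multiplying by $\Delta_a$ and summing over $a\neq\aOpt$ and plugging in $\nb=(7/2)L$ yields the first instance-dependent bound. The second instance-dependent form follows immediately from the Bernoulli identity $\sigma_a^2=\mu_a(1-\mu_a)\le\mu_a=\muOpt+\Delta_a$, so $\sigma_a^2/\Delta_a\le\muOpt/\Delta_a+1$.

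For the worst-case bound I would use the standard threshold trick. Fix $\Delta^*>0$ to be chosen, split
\begin{align*}
\regret[T]=\sum_{a:\Delta_a\le\Delta^*}N_a\Delta_a+\sum_{a:\Delta_a>\Delta^*}N_a\Delta_a\le T\Delta^*+\sum_{a:\Delta_a>\Delta^*}\Bigl(\frac{C\nb L\,\sigma_a^2}{\Delta_a}+C\nb L\Bigr),
\end{align*}
and upper bound the second sum by $C\nb L\sum_a\sigma_a^2/\Delta^*+C\nb LK$. Optimizing over $\Delta^*=\sqrt{\nb L\sum_a\sigma_a^2/T}$ yields the $\sqrt{T\sum_a\sigma_a^2}\log(6TK/\delta)$ bound up to the lower-order $K\log^2$ term. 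The $\sqrt{T\muOpt K}$ variant is obtained from the very same calculation but using $\sigma_a^2\le\muOpt+\Delta_a$, so that $\sum_{\Delta_a>\Delta^*}\sigma_a^2/\Delta_a\le K\muOpt/\Delta^*+K$, leading to an optimization that produces $\sqrt{T K\muOpt}$ instead.

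The routine parts are the union bounds and the Bernoulli variance manipulation; the one place to be slightly careful is the inversion step, since the concentration bound has both a $1/n$ and a $1/\sqrt{n}$ term, so one must split into two cases (which of the two terms dominates $\Delta_a/2$) rather than solving a single equation. The other minor bookkeeping issue is that the concentration bound is stated with $n/\nb+1$ in the denominator rather than $|\batchIdxs_{n,a,\nb'}|$, which is what lets us turn the per-batch count into the total count $N_a$ cleanly; this is exactly the reason the extra $+1$ was built into the denominator of the batch estimator in \eqref{eq:optimistic-mean-estimator}.
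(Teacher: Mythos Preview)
Your proposal is correct and follows essentially the same route as the paper: define the same two-part good event (optimism for $\aOpt$ via \cref{lemma:mean-estimator-optimism} with $\delta/2$, concentration for all arms via \cref{lemma:mean-estimator-concentration} with $\delta/(2K)$), combine the decision rule with optimism to get $\Delta_a$ bounded by the concentration slack at the last pull of $a$, invert to bound $\nta[T]$, and then run the threshold trick for the worst-case bound. The only cosmetic difference is that the paper inverts by solving the quadratic in $\sqrt{(n/\nb)+1}$ directly rather than splitting into the two ``which term dominates $\Delta_a/2$'' cases; both give the same $\nb(\sigma_a^2/\Delta_a^2 + 2/\Delta_a)\log(6TK/\delta)$ bound. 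One small slip in your closing remark: the extra constant in the denominator of \eqref{eq:optimistic-mean-estimator} is $+2$, and its purpose is the optimism property (\cref{lemma:mean-estimator-optimism}), not the concentration bookkeeping you describe.
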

In the above, we use the fact that $\sigma_a^2\leq \mu_a=\muOpt+\Delta_a$.
\paragraph{Dependence on the true arm distributions.}
We note that our regret bounds depend on the variance due to our use of the Bernstein type inequality in \cref{lemma:bernstein}. This choice was made in order to obtain a clear and intuitive result. A tighter bound can be achieved by using the tight Chernoff bound for the empirical mean of each arm, which for Bernoulli r.v depends on the KL divergence. This is true even if different arms come from different distributional families, e.g., some are normal and some are Bernoulli, and does not affect the algorithm or its parameter.

%


\paragraph{A distributed view.}
We presented \cref{alg:batch-ensemble} in the familiar UCB-style index rule. However, a potentially more insightful perspective is to view each batch $\nb' \in [\nb]$ as a separate bandit algorithm that at each round $t \in [T]$ outputs a prediction 
$
    a_{t, \nb'}
    \in
    \argmin_{a \in [K]} \muHat_{(\nta[t-1]), (a), (\nb')}
$
and its estimate  
$
    \muHat_{t, \nb'}
    =
    \muHat_{(\nta[t-1]), (a_{t, \nb'}), (\nb')}
    ,
$
and the final decision is made by greedily choosing the best batch, i.e., $a_t = a_{t, \nb_t^\star}$ where 
$
    \nb_t^\star
    \in
    \argmin_{\nb' \in [\nb]} \muHat_{t,\nb'}
    .
$
This decision rule is equivalent to \cref{eq:decision-rule}.
Importantly, we believe this view could help scale our approach to other problem settings such as MDPs where each batch would output a policy and its value prediction and the final policy is the one associated with the most optimistic batch. We leave this for future research.

We note that, unlike some distributed learning routines that aggregate decisions by averaging them to reduce uncertainty (noise), our approach selects a single ``noisy'' batch whose decision is followed. This is key to ensuring the optimistic property of our algorithm.


\paragraph{An anytime expected regret algorithm.}
One can always use the doubling trick to obtain an anytime algorithm. However, this leaves the dependence on the confidence level $\delta$ that, realistically, depends on the time horizon $T$. To avoid this, we show that choosing the number of batches as $\nb_t = 8 \log t$, \cref{alg:batch-ensemble} has expected regret ($\EE \regret, t \ge 1$) bounded similarly to \cref{theorem:high-prob-regret} but with $T$ replaced with $t$ and the dependence on $\delta$ removed. For details see the supplementary material.

\section{Beyond Bernoulli arms}
\label{sec:beyond-bernoulli}

Our results thus far focused on Bernoulli distributed arms. However, this is not explicitly encoded in our algorithm but rather in its analysis.
In fact, our algorithm works without change for any arm distributions that satisfy properties akin to \cref{lemma:bernstein,lemma:anti-concentration}. 

\cref{lemma:bernstein} is a standard concentration bound for bounded random variables. If the random variables are unbounded, \cref{lemma:bernstein} can be replaced with the appropriate Chernoff bound. As long as the distributions are light-tailed (e.g., sub-Gaussian or sub-exponential), this will not change the regret bound significantly. We emphasize that the algorithm does not need to know the tail behavior and thus the bound may be tailored to the true distribution of each arm.

\cref{lemma:anti-concentration} is a type of anti-concentration result for sums of Bernoulli random variables. We conjecture that all bounded random variables satisfy this property, but have been unable to prove this. For further details see the conjecture at the end of the section.
In what follows, we describe several methods and conditions to satisfy the anti-concentration property for non-Bernoulli arms.
\paragraph{Bernoulli-fication.}
It is well known that arm distributions in $[0,1]$ can be converted into Bernoulli arms. To do this one replaces the observed losses of the algorithm $\lna$ with samples $\barlna \sim \ber(\lna)$. If the distribution is in $[0,b]$, one can first scale the losses by dividing with $b$. It is straightforward to verify that $\EE \lna = \EE \barlna$, and thus \cref{theorem:high-prob-regret} holds but with $\sigma_a$ replaced with the variance of $\barlna$. This does not impact the first-order regret bounds, but can significantly increase the variance-dependent (second-order) bounds (e.g., for deterministic arms).

\paragraph{Scaled Bernoulli.}
It is often the case that arm distributions are not evenly scaled. Most bandit algorithms such as UCB or UCB-V have a single scale parameter, which bounds the worst-case arm. \cref{alg:batch-ensemble} does not need to know the scales in advance and automatically enjoys dependence on the true arm scales. To see this, consider arm distributions that are scaled Bernoulli variables with parameters $b_a \ge \mu_a \ge 0$ such that
\begin{align*}
    \lna
    =
    \begin{cases}
        b_a, &\text{ w.p } \mu_a/b_a
        \\
        0, &\text{ otherwise}
        .
    \end{cases}
\end{align*}
Notice that if we scale the arms in the analysis, we can still use \cref{lemma:anti-concentration} to get the optimism claim in \cref{lemma:mean-estimator-optimism}. As for concentration, scaling \cref{lemma:bernstein} replaces the $2/n$ term with $2b_a/n$. Propagating this into the analysis of \cref{theorem:high-prob-regret} would modify the bounds such that
\begin{align*}
    \brk[s]*{
        \frac{\sigma_a^2}{\Delta_a}
        +
        2
    }
    ,
    \brk[s]*{
        \frac{\muOpt}{\Delta_a}
        +
        3
    }
    &\implies
    \brk[s]*{
        \frac{\sigma_a^2}{\Delta_a}
        +
        2b_a
    }
    ,
    \brk[s]*{
        \frac{\muOpt}{\Delta_a}
        +
        3b_a
    }
    \\
    11 K \log^2\frac{6TK}{\delta}
    &\implies
    11 \sum_{a \neq \aOpt} b_a \log^2\frac{6TK}{\delta}
    .
\end{align*}
Notice that we do not depend on the scale of the optimal arm, which could be meaningful when it is significantly larger than the scale of sub-optimal arms.

\paragraph{Symmetric distributions}
Our algorithm works unchanged for symmetric arm distributions (around their mean).
To see this, notice that the sum of symmetric random variables is also symmetric, and thus \cref{lemma:anti-concentration} holds with probability at least $1/2$ (instead of $1/4$).
We note that unlike \cite{khorasani2023maximum}, we do not require the distributions to be continuous.
In particular, the above implies that our algorithm works for Gaussian arm distributions.

\paragraph{Arms with lower bounded variance.}
Recall that the Central Limit Theorem (CLT) implies that any (appropriately scaled) sum of random variables converges in distribution to a Gaussian, which satisfies \cref{lemma:anti-concentration}. Concretely, this implies that even for non-symmetric random variables, the sample mean becomes symmetric as the sample size increases.
In the following, we make this informal argument concrete. Let $X_i, i \ge 1$ be i.i.d random variables with mean $\mu$. Let $C_\sigma \ge 0$ be a constant such that $\rho / \sigma^3 \le C_\sigma$ where $\sigma^2$ is the variance of $X_i$ and $\rho$ is its third absolute central moment
\begin{align*}
    \rho
    =
    \EE\brk[s]{\abs{X_i - \mu}^3}
    .
\end{align*}
Note that for any sufficiently light-tailed distribution (normal, exponential, bounded), $C_\sigma$ is bounded (up to a numerical constant) by $\sigma^{-1}$. Thus, it suffices to have a lower bound on the variance to bound $C_\sigma$.
Define $Y_n = (\sum_{i \in [n]} (X_i - \mu) / (\sigma \sqrt{n})$ and let $F_n(\cdot)$ be its Cumulative Distribution Function (CDF). 
The Berry-Esseen Theorem (see e.g. \cite{shevtsova2011absolute}) states that
\begin{align*}
    \abs{F_n(x) - \Phi(x)} \le C_\sigma / 2\sqrt{n}
    ,
    \quad
    \forall x \in \RR, n \ge 1
    ,
\end{align*}
where $\Phi$ is the CDF of the standard normal distribution. Taking $x = 0, n \ge 4 C_\sigma^2$ we conclude that
\begin{align*}
    \Pr\brk*{
    \sum_{i \in [n]} X_i \le n \mu
    }
    &
    =
    \Pr\brk{Y_n \le 0}
    \\
    &
    \ge
    \Phi(0) - C_\sigma / 2\sqrt{n}
    \ge 
    1/4
    ,
\end{align*}
which is the equivalent of \cref{lemma:anti-concentration}. We conclude that \cref{alg:batch-ensemble} can work for general distributions by adding a warmup phase that collects $4 C_\sigma^2 \approx 4 \sigma^{-2}$ samples for each arm and batch.

\paragraph{Conjecture.}
Notice that for Bernoulli arms we have $\sigma_a^2 = \mu_a (1-\mu_a)$. As such, the above logic would suggest that a long warmup phase is necessary when $\mu_a$ is close to either $0$ or $1$. However, \cref{lemma:anti-concentration} reveals this to be unnecessary. The reason for this gap is that we only require $F_n(0)$ to be sufficiently large whereas the Berry-Esseen Theorem ensures $F_n(x)$ converges to $\Phi(x)$ for all $x$, which is much stronger.

Notice that the Bernoulli distribution can be extremely asymmetric when $\mu_a$ is close to $0$ or $1$. This leads us to believe that it might be the worst-case distribution for the anti-concentration result in   \cref{lemma:anti-concentration} (among the class of bounded random variables). Thus, we conjecture that \cref{lemma:anti-concentration} holds for any bounded arm distribution and consequently so do the regret guarantees of \cref{alg:batch-ensemble}.

\section{Proof of Theorem \ref{theorem:high-prob-regret}}
\label{sec:proofs}
Recall that the pseudo-regret may be written as
\begin{align}
\label{eq:reg-decomp}
    \regret[T]
    =
    \sum_{a \neq \aOpt} \nta[T] \Delta_a
    ,
\end{align}
where $\Delta_a = \mu_a - \muOpt$ is the optimality gap of arm $a \in [K]$ and $\nta$ is defined in \cref{eq:nta}. Thus, our goal is to bound $\nta[T]$ for each sub-optimal arm.
We begin with a standard ``good event'' over which the regret is bounded deterministically. Suppose that for all $n \in [T]$ and $a \neq \aOpt$ we have
\begin{align}
    &
    \label{eq:good-optimism}
    \muHat_{n,\aOpt} \le \muOpt
    \\
    &
    \label{eq:good-concentration}
    \muHat_{n,a}
    \ge
    \mu_a
    -
    \frac{2}{(n/\nb) + 1}\log\frac{6KT}{\delta}
    -
    \sqrt{\frac{\sigma_a^2}{(n/\nb) +1} \log \frac{2TK}{\delta}}
    .
\end{align}
Taking a union bound over \cref{lemma:mean-estimator-optimism} with $\delta/2$ and \cref{lemma:mean-estimator-concentration} with $\delta/2K$, the above holds with probability at least $1 - \delta$.
Now, suppose that arm $a$ was played at time $t$. Then by the decision rule in \cref{eq:decision-rule}, we have that
$
    \muHat_{\nta[t - 1], \aOpt} \ge \muHat_{\nta[t - 1],a}
    ,
$
and thus
\begin{align*}
    \muOpt
    \tag{\cref{eq:good-optimism}}
    &
    \ge
    \muHat_{\ntaStar[t-1],\aOpt}
    \\
    &
    \ge
    \muHat_{\nta[t-1], a}
    \\
    &
    \tag{\cref{eq:good-concentration}}
    \ge
    \mu_a
    -
    \frac{2}{(\nta[t-1] / \nb) + 1}\log\frac{6TK}{\delta}
    \\
    &
    \qquad
    -
    \sqrt{\frac{\sigma_a^2}{(\nta[t-1] / \nb) + 1} \log \frac{2TK}{\delta}}
    .
\end{align*}
Solving this quadratic inequality for $\nta[t-1]$, we have
\begin{align*}
    \nta[t-1]
    \le
    \nb \brk[s]*{
        -1
        +
        \brk*{
        \frac{\sigma_a^2}{\Delta_a^2}
        +
        \frac{2}{\Delta_a}
        }\log\frac{6TK}{\delta}
    }
    ,
\end{align*}
Now, let $t_a$ be the last time arm $a$ was chosen. Then we have
\begin{align*}
    \nta[T]
    =
    \nta[t_a]
    =
    1 + \nta[t_a - 1]
    &
    \le
    \frac{7}{2}
    \brk*{
        \frac{\sigma_a^2}{\Delta_a^2}
        +
        \frac{2}{\Delta_a}
    }\log^2\frac{6TK}{\delta}
    \\
    &
    \le
    \frac{7}{2}
    \brk*{
        \frac{\mu_a}{\Delta_a^2}
        +
        \frac{2}{\Delta_a}
    }\log^2\frac{6TK}{\delta}
    \\
    &
    =
    \frac{7}{2}
    \brk*{
        \frac{\muOpt}{\Delta_a^2}
        +
        \frac{3}{\Delta_a}
    }\log^2\frac{6TK}{\delta}
    ,
\end{align*}
where the first inequality used our choice of $\nb = (7/2)\log(2T/\delta)$ and the second inequality used that for any random variable in $[0,1]$, we have $\sigma^2 \le \mu$.
Plugging this into \cref{eq:reg-decomp} concludes the instance-dependent regret bounds.

%
%

Next, for instance-independent bounds, we use the standard method of splitting the bound according to the sub-optimality to get that for any $c > 0$,
\begin{align*}
    \nta[T] \Delta_a
    &
    =
    \nta[T] \Delta_a \brk[s]*{
        \indEvent{\Delta_a \le c^{-1}}
        +
        \indEvent{\Delta_a^{-1} < c}
    }
    \\
    &
    \le
    \frac{\nta[T]}{c}
    +
    \frac{7}{2}
    \brk*{
        \muOpt c
        +
        3
    }\log^2\frac{6TK}{\delta}
    .
\end{align*}
Plugging into \cref{eq:reg-decomp} and setting $c = \sqrt{\frac{2T}{7 \muOpt K \log^2(6TK/\delta)}}$ we have
\begin{align*}
    \regret[T]
    &
    \le
    \sum_{a \neq \aOpt}
    \frac{\nta[T]}{c}
    +
    \frac{7}{2}
    \brk*{
        \muOpt c
        +
        3
    }\log^2\frac{6TK}{\delta}
    \\
    &
    \le
    \frac{T}{c}
    +
    \frac{7}{2}K
    \brk*{
        \muOpt c
        +
        3
    }\log^2\frac{6TK}{\delta}
    \\
    &
    \le
    \sqrt{14\muOpt T K} \log\frac{6TK}{\delta}
    +
    11 K \log^2\frac{6TK}{\delta}
    .
\end{align*}
Finally, we perform a similar procedure for the variance-dependent bound to get that
\begin{align*}
    \nta[T] \Delta_a
    \le
    \frac{\nta[T]}{c}
    +
    \frac{7}{2}
    \brk*{
        \sigma_a^2 c
        +
        2
    }\log^2\frac{6TK}{\delta}
    ,
\end{align*}
and thus setting $c = \sqrt{\frac{2T}{7 (\sum_{a \neq \aOpt} \sigma_a^2) \log^2(6TK/\delta)}}$, we have
\begin{align*}
    \regret[T]
    &
    \le
    \sum_{a \neq \aOpt}
    \frac{\nta[T]}{c}
    +
    \frac{7}{2}
    \brk*{
        \sigma_a^2 c
        +
        2
    }\log^2\frac{6TK}{\delta}
    \\
    &
    \le
    \frac{T}{c}
    +
    \frac{7}{2}
    \brk*{
        c \brk*{\sum_{a \neq \aOpt}\sigma_a^2}
        +
        2K
    }\log^2\frac{6TK}{\delta}
    \\
    &
    \le
    \sqrt{14 T \sum_{a \neq \aOpt}\sigma_a^2} \log\frac{6TK}{\delta}
    +
    7 K \log^2\frac{6TK}{\delta}
    .
\end{align*}

\begin{figure}[t]
    \centering
    \includegraphics[width=\linewidth]{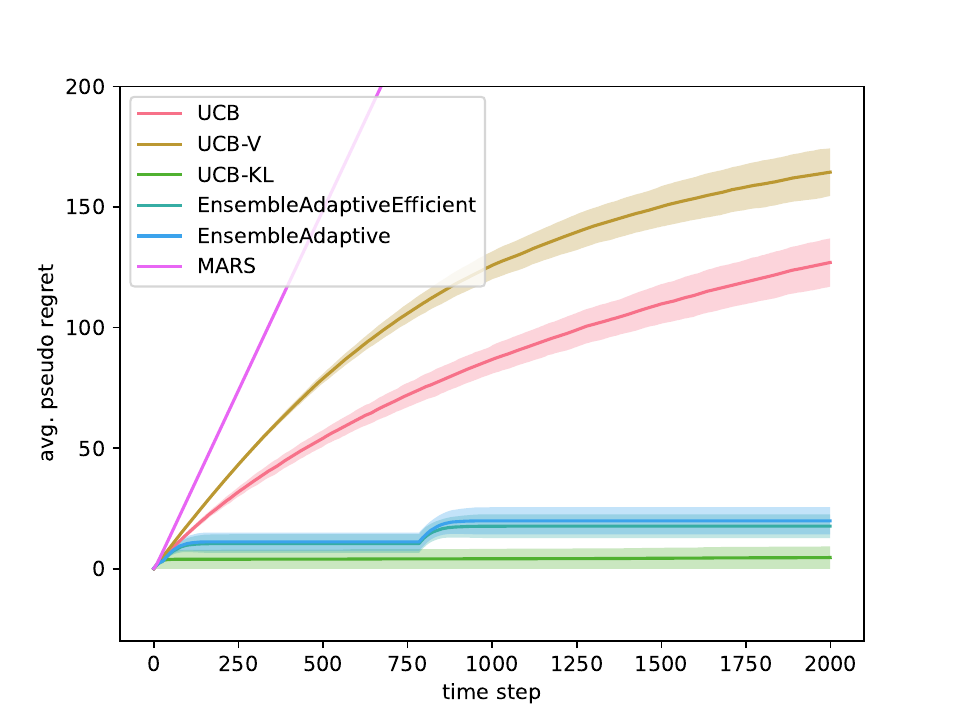}
    \caption{Results for $5$ Bernoulli arms with clear, low-variance, best arm. The arms expectations tested are $0.001,0.15,0.2,0.25,0.3$.}
    \label{fig:test1}
\end{figure}

\section{Experiments}
In this section, we compare the performance of our ensemble method to other provable methods such as \texttt{UCB}, \texttt{UCB-V}, \texttt{UCB-KL}, and \texttt{MARS}, on synthetic Bernoulli, Gaussian, and exponential MAB environments.
In our experiments, we refer to \cref{alg:batch-ensemble} as \texttt{EnsembleAdaptive}, and to a variant of it, \texttt{EnsembleAdaptiveEfficient} that has an improved running time. 
We note that the only difference between the two implementations is that, when increasing the number of batches, instead of redistributing all the samples, the more efficient implementation adds samples to the new (empty) batch until it reaches the size of the remaining batches.
In the following, we consider five test cases for MAB environments with five and ten arms. In all test cases, we run $100$ simulations each with $T=2000$ steps. The performance criterion is the averaged pseudo-regret across all simulations.

\noindent\textbf{Test case 1: Bernoulli arms with clear, low-variance, best arm.}
In this test case, we generated a synthetic environment of MAB with five Bernoulli arms, where the means are $0.001, 0.15, 0.2, 0.25,$ and $0.3$ for each arm respectively. Since we work with losses, the first arm is optimal (and has low variance). The other arms are non-optimal and the suboptimality gaps are relatively large, making best-arm identification easier, and demonstrating the superiority of variance-dependent methods.

As seen in \cref{fig:test1}, MARS has a long burn-in period with near-linear pseudo regret. 
On the other hand, the UCB-based algorithms obtain much better regret, with UCB-KL obtaining the lowest regret among all algorithms. However, our Ensemble method implementations are very close to UCB-KL, and significantly better than UCB, UCB-V, and MARS. 
We note that there is a slight jump in the plot of our Ensemble methods results due to the adaptive batch size, which increases the number of batches after approximately $800$ steps.

\noindent\textbf{Test case 2: Bernoulli arms with high expected loss and low variance.}
In this test case, we examine the performance of the algorithms in a high-expected loss scenario where the means are  $0.9,0.91,0.92,\ldots,0.99$. 
We note that in this case, identifying the best arm, i.e., the first arm, is hard, since the sub-optimality gaps are relatively low, with the smallest being $0.01$. The theory (specifically \cref{lemma:anti-concentration}) suggests this case could challenge our approach, thus making it interesting.
As arm variances are low, the variance-dependent algorithms obtain much better results than UCB and MARS. UCB-V is slightly better than UCB,  and the best performance is obtained by UCB-KL and our Ensemble methods, which have similar results.
\begin{figure}
    \centering
    \includegraphics[width=\linewidth]{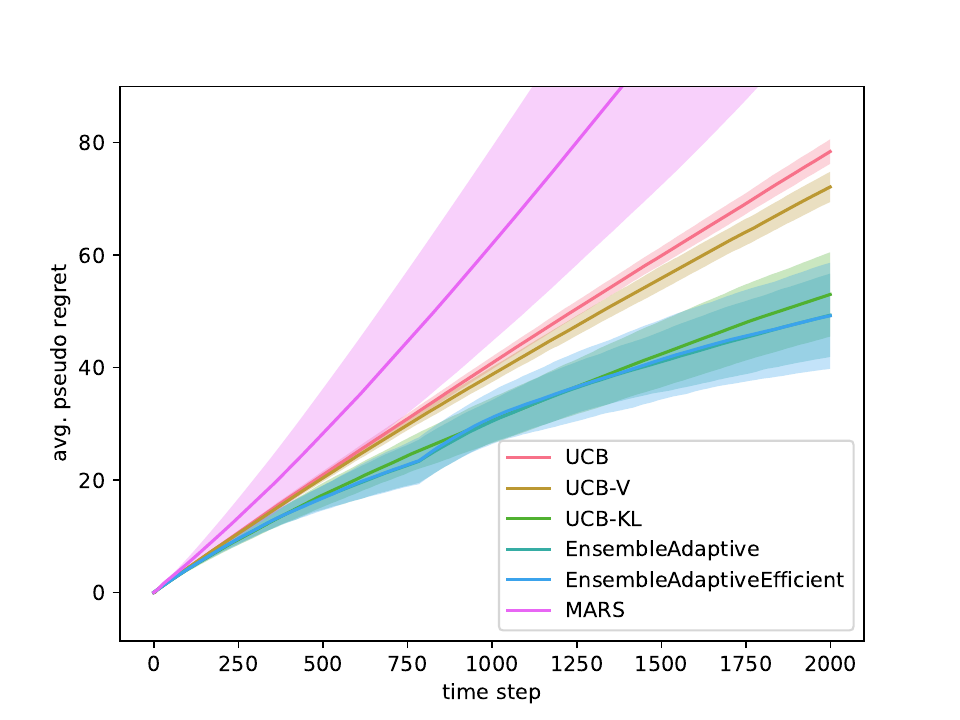}
    \caption{Results for $10$ Bernoulli arms with means $0.9,0.91,0.92,\ldots,0.99$.}
    \label{fig:test2}
\end{figure}

\noindent\textbf{Test case 3: Bernoulli arms with random means.}
This case examines the typical behavior of the algorithms for the classical case of randomly chosen means for Bernoulli arms. In this experiment, in each one of the $100$ simulations, we sampled $10$ numbers uniformly from the interval $[0,1]$. Each number represents the mean of one Bernoulli arm. We tested all algorithms using the same sampled means. 
\begin{figure}
    \centering
    \includegraphics[width=\linewidth]{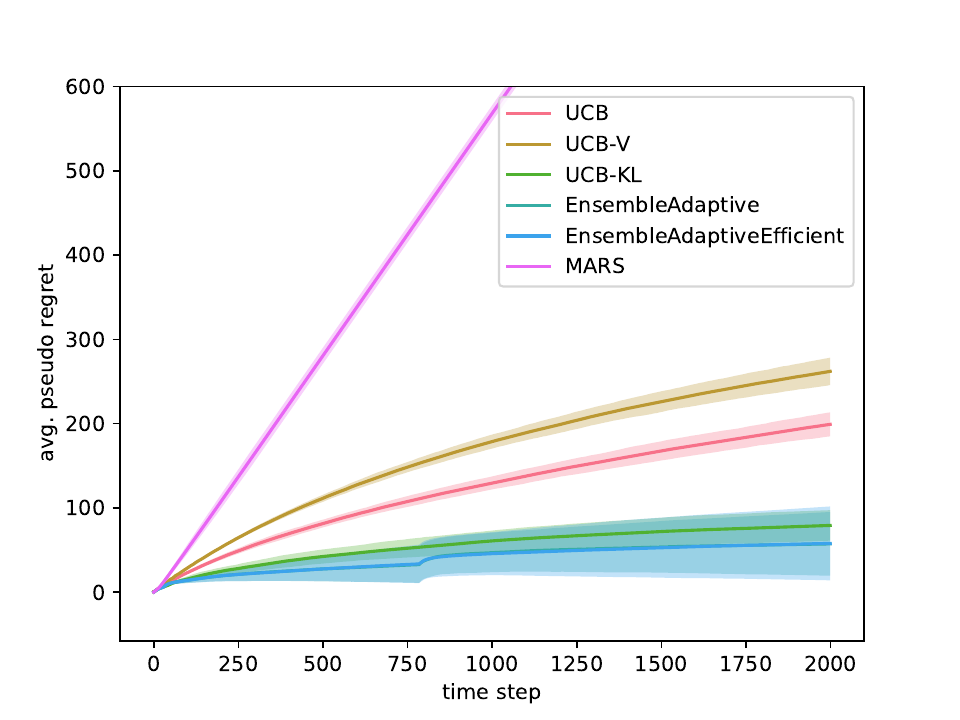}
    \caption{Results for $10$ Bernoulli arms with random means.}
    \label{fig:test3}
\end{figure}
\cref{fig:test3} presents similar trends. While MARS incurs near-linear regret, UCB-KL and our ensemble implementations perform similarly (with the ensembles having lower means but higher variance), and significantly better than UCB-V and UCB. Still, it is interesting to observe that the more efficient implementation obtains better results.
Also, in this case, UCB-V has a lower performance than UCB. This could be attributed to UCB having slightly better-tuned confidence bounds when the arms have a high variance.

\noindent\textbf{Test case 4: Gaussian arms with random means and variance 1.}
We also consider the case of $10$ Gaussian arms with randomly chosen means in $[0,1]$ and standard deviation $\sigma = 1$.
In each one of the $100$ simulations, we sampled uniformly at random $10$ numbers in $[0,1]$. Each chosen number represents the mean of the related Normal arm. We tested all the compared algorithms using the same means in all simulations.   
\cref{fig:test4} demonstrates similar trends to those observed in test case 2 but in a higher-variance environment. Again, MARS has near-linear regret, and UCB-V is beaten by UCB, likely due to the high variance of the arms. However, here, our Ensemble method implementations outperform UCB-KL. Another interesting observation is that UCB-KL and our ensemble methods have a relatively high standard deviation in this experiment.
\begin{figure}
    \centering
    \includegraphics[width=\linewidth]{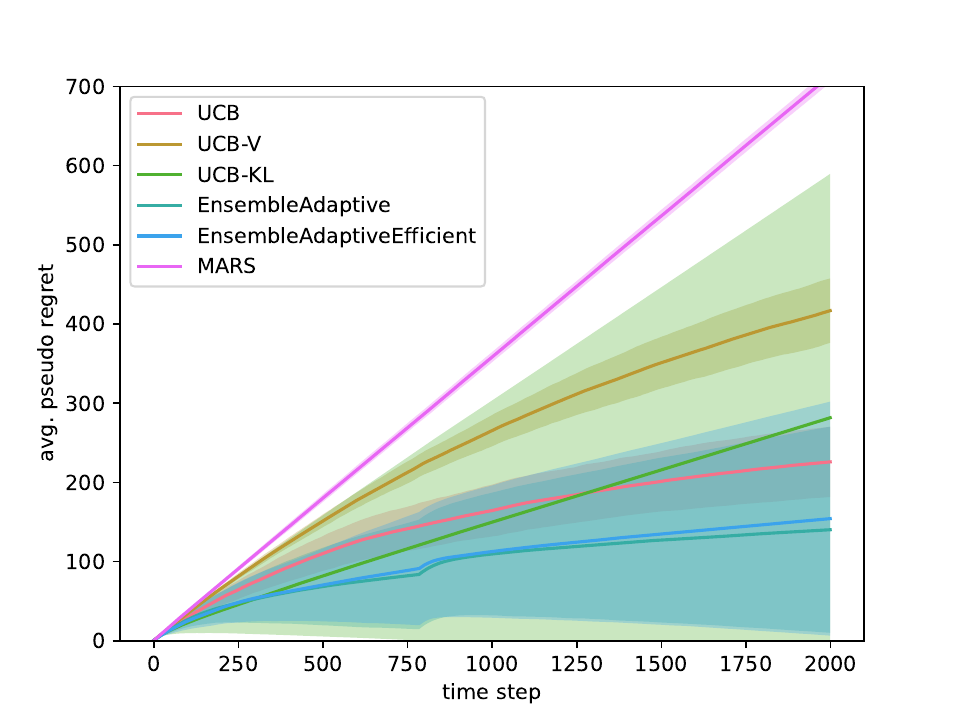}
    \caption{Results for $10$ Gaussian arms with random means and the variance 1.}
    \label{fig:test4}
\end{figure}

\vspace{1em}
\noindent\textbf{Test case 5: Exponential arms with random scales.}
Finally, consider MAB with $10$ Exponential arms and randomly chosen scales in $[0,1]$. This represents a scenario where our theoretical guarantees do not hold without a warmup, thus challenging our conjecture that our results hold even for non-symmetric distributions. In each one of the $100$ simulations, we sampled uniformly at random $10$ numbers in $[0,1]$. Each chosen number represents the scale of one Exponential arm. We tested all the compared algorithms using the same scales in each simulation, where we recall that for scale $\lambda>0$, the mean of the exponential distribution is $\lambda^{-1}$. Hence, the expected losses are relatively high, which is also a scenario that may challenge our algorithm.
\cref{fig:test5} is a positive signal for our conjecture. Our ensemble implementations outperform UCB-KL. Standard UCB performs worse but outperforms UCB-V (due to relatively high variance arms), and MARS again has near-linear regret albeit with performance ranging between UCB and UCB-V.

\begin{figure}
    \centering
    \includegraphics[width=\linewidth]{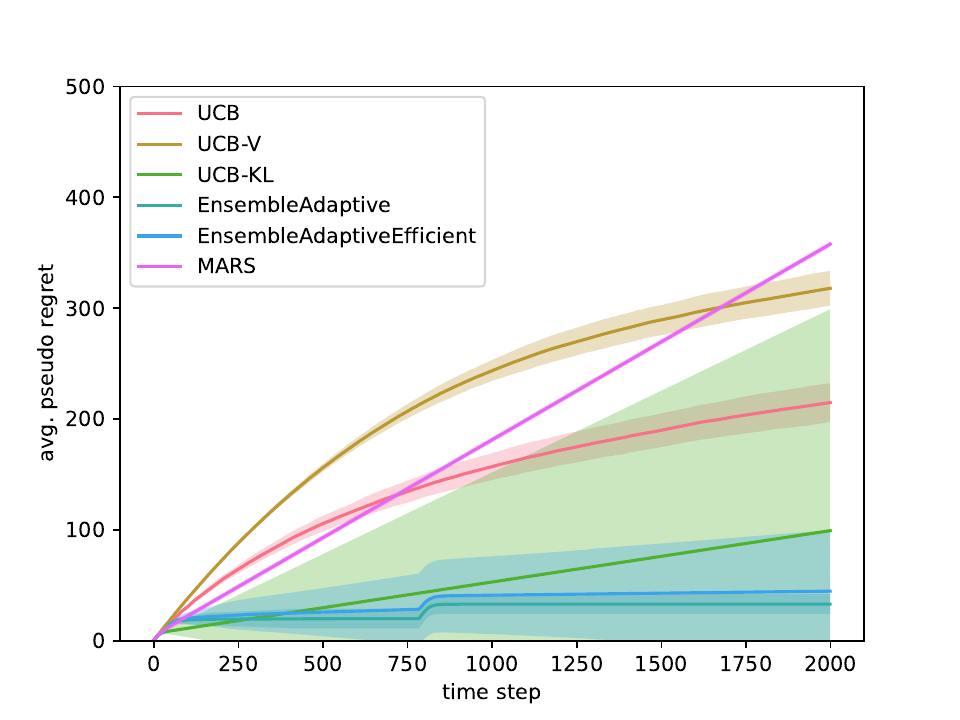}
    \caption{ $10$ Exponential arms with random scales.}
    \label{fig:test5}
\end{figure}

\noindent\textbf{Discussion and summary of the results.}
Our experiments point to our ensemble implementations having good performance across several scenarios. Our performance is close and sometimes better compared to UCB-KL and consistently better than the remaining methods, especially in scenarios where the arms have low variances.
Another advantage of our ensemble algorithms is their running time, which is close to that of standard UCB (constant per step), and memory usage, which scales logarithmically in the time horizon $T$.
In comparison, MARS has a running time and memory usage of $O(T^2)$, and for Bernoulli arms, UCB-KL has to solve an optimization problem using an interior point algorithm at each time step.
Overall, this positions our ensemble implementations as promising practical methods. As an added bonus, our improved efficiency variant exhibits similar and sometimes improved performance. 

\section*{Acknowledgements}
This project has received funding from the European Research Council (ERC) under the European Union’s Horizon
2020 research and innovation program (grant agreement No.
882396 and grant agreement No. 101078075). Views and opinions expressed are
however those of the author(s) only and do not necessarily
reflect those of the European Union or the European Research Council. Neither the European Union nor the granting authority can be held responsible for them. 
This work received additional support from the Israel Science Foundation (ISF, grant numbers 993/17 and 2549/19), Tel Aviv University Center for AI
and Data Science (TAD), the Yandex Initiative for Machine
Learning at Tel Aviv University, the Len Blavatnik and the
Blavatnik Family Foundation, and by the Israeli VATAT data science scholarship.


\bibliography{bibliography}


\clearpage
\appendix
\onecolumn

\section{Expected Regret}
In this section, we prove the following anytime expected regret guarantee for \cref{alg:batch-ensemble} (proof at the end of the section).
\begin{theorem}
\label{theorem:expected-regret}
    Suppose we run \cref{alg:batch-ensemble} with the number of batches $\nb_t = 8 \log t$. Then the following regret bounds hold for all $t \ge 1$:
    \begin{align*}
        \EE\brk[s]{\regret[t]}
        &
        \le
        \sum_{a \neq \aOpt}
        9\brk[s]*{
        \frac{8\sigma_a^2}{\Delta_a} + \frac{4}{3}
        }
        \log^2 (72t)
        \le
        \sum_{a \neq \aOpt}
        9\brk[s]*{
        \frac{8\muOpt}{\Delta_a} + \frac{28}{3}
        }
        \log^2 (72t)
        \\
        \EE\brk[s]{\regret[t]}
        &
        \le
        17 \sqrt{t \cdot\min\brk[c]*{\muOpt K, \sum_{a \neq \aOpt}\sigma_a^2}} \log(72t)
        +
        84 K \log^2(72t)
        .
    \end{align*}
\end{theorem}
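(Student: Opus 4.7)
The plan is to follow the proof of \cref{theorem:high-prob-regret} step by step, but run it at every time step with a time-varying confidence, converting the fixed-horizon high-probability statement into an anytime expected-regret bound via a summability argument. As there, I start from the decomposition $\regret[t] = \sum_{a \neq \aOpt} \nta[t] \Delta_a$ and aim to show $\EE[\nta[t]] \le N(a,t) + C$ for a deterministic threshold $N(a,t) = O((\sigma_a^2/\Delta_a^2 + 1/\Delta_a) \log^2 t)$ and a universal constant $C$ arising from summed failure probabilities; the theorem then follows.

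\textbf{Per-round good event.} For each round $s \ge 1$ I introduce a good event $\mathcal{G}_s$ on which (i) the optimism property of \cref{lemma:mean-estimator-optimism} holds for $\aOpt$ at round $s$ with $\nb_s = 8 \log s$ batches, and (ii) the concentration guarantee of \cref{lemma:mean-estimator-concentration} holds simultaneously for every sub-optimal arm at round $s$ with ``horizon'' $s$ and confidence $\delta_s = 1/s^2$. The choice $\nb_s = 8 \log s$ makes the optimism failure bound $e^{-2\nb_s/7} = s^{-16/7}$ summable in $s$, and a union bound over arms in (ii) gives per-round concentration failure $\le K/s^2$, also summable. Hence $\sum_{s \ge 1} \Pr[\neg \mathcal{G}_s]$ is a universal constant $C = C(K)$.

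\textbf{Deterministic bound on $\mathcal{G}_s$ and conclusion.} On $\mathcal{G}_s$, the exact chain of inequalities from the proof of \cref{theorem:high-prob-regret} --- optimism for $\aOpt$, the greedy rule $\muHat_{\ntaStar[s-1], \aOpt} \ge \muHat_{\nta[s-1], a_s}$, and concentration for the played arm --- produces a quadratic inequality in $\nta[s-1]$ whose solution $N(a,s)$ is of order $\nb_s (\sigma_a^2/\Delta_a^2 + 1/\Delta_a) \log(Ks^2)$, i.e., $O((\sigma_a^2/\Delta_a^2 + 1/\Delta_a) \log^2 s)$, and nondecreasing in $s$. Therefore $\nta[t] \le N(a,t) + \sum_{s=1}^t \indEvent{\neg \mathcal{G}_s}$, and taking expectations yields $\EE[\nta[t]] \le N(a,t) + C$. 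The instance-dependent bound follows by multiplying by $\Delta_a$ and summing, with both forms of the bound coming from $\sigma_a^2 \le \mu_a = \muOpt + \Delta_a$ as in the high-probability proof. The two instance-independent bounds follow from the same ``split by gap'' trick $\nta[t]\Delta_a = \nta[t]\Delta_a \indEvent{\Delta_a \le c^{-1}} + \nta[t]\Delta_a \indEvent{\Delta_a^{-1} < c}$ used at the end of that proof, once with the $\muOpt$ substitution and once directly with $\sigma_a^2$, optimizing $c$ in each case.

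\textbf{Main obstacle.} The main technical care is synchronizing the time-varying batch count $\nb_s = 8\log s$ with a time-varying confidence $\delta_s$ so that the per-round optimism failure, the per-round concentration failure, and the resulting threshold growth $N(a,s) = O(\nb_s \log(1/\delta_s))$ are all controlled simultaneously --- summable in $s$ for the first two, polylogarithmic in $s$ for the third. The schedule $\nb_s = 8\log s$, $\delta_s \propto 1/s^2$ threads this needle cleanly; matching the precise constants $9$, $4/3$, $28/3$, $17$, $72$, $84$ in the theorem statement is mechanical bookkeeping and requires no ideas beyond those used in the proof of \cref{theorem:high-prob-regret}.
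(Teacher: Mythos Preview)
Your approach is correct and takes a genuinely different route from the paper. The paper does \emph{not} rerun the high-probability argument per round with a time-varying confidence; instead it invokes the pull-count decomposition of \citet{audibert2009exploration} (restated as \cref{lemma:nta-decomp}), which bounds $\EE[\nta]$ by a threshold $u$ plus a double sum $\sum_{s}\sum_{n\ge u}\Pr[\muHat_{s,n,a}-\mu_a<-\Delta_a]$ plus a sum of optimism-failure probabilities. To control the double sum it proves a dedicated Bernstein-type bound on $\Pr[\muHat_{s,n,a}-\mu_a<-\Delta_a]$ (\cref{lemma:mean-estimator-concentration-2}) rather than reusing \cref{lemma:mean-estimator-concentration}, then sums the resulting geometric series in $n$. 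Your per-round good-event argument is more self-contained (no external lemma, no new concentration result) and is the more direct way to ``lift'' the proof of \cref{theorem:high-prob-regret}; the paper's route is the classical template from the UCB-V literature and gives somewhat cleaner control of the leading constants.

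Two bookkeeping points you should tighten. First, your optimism budget $e^{-2\nb_s/7}=s^{-16/7}$ is for a \emph{fixed} sample count, but $\ntaStar[s-1]$ is random; you must union bound over $n\in\{0,\ldots,s-1\}$, giving $s\cdot s^{-16/7}=s^{-9/7}$, which is still summable (this is exactly what the paper does inside the proof of \cref{lemma:nta-bound}). Second, if you put \emph{all} sub-optimal arms into a single $\mathcal{G}_s$, the summed failure probability $C(K)$ scales with $K$ and then contributes $\Theta(K)$ to each $\EE[\nta]$, spoiling the constants; defining arm-specific events $\mathcal{G}_s^a$ (optimism for $\aOpt$ plus concentration for arm $a$ only) removes this factor. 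Neither point threatens the order of the bound, only the displayed constants.
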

Before proving \cref{theorem:expected-regret}, we first need to extend the definition of our mean estimator (\cref{eq:optimistic-mean-estimator}) to include an index for the time-varying number of batches $\nb_t$.
\paragraph{Extended notation for the mean estimator.}
Suppose we have observed $n \ge 0$ samples of an arm $a \in [K]$, i.e., $\lna[n'], n' \in [n]$. We build the following mean estimator at time $t \ge 1$. First, let, $\nb_t \ge 1$ be a batch number to be determined later. Next, we split the $n$ samples of arm $a$ into $\nb_t$ (near-)equal batches
\begin{align*}
    \batchIdxs_{t, n, a, \nb'}
    =
    \brk[c]*{n' : n' = \nb' + i \cdot \nb_t \le n, i \in \ZZ_{\ge 0}}
    ,
    \nb' \in [\nb_t]
    .
\end{align*}
Our batch ensemble estimator is 
\begin{align}
\label{eq:optimistic-mean-estimator-aug}
    \muHat_{t,n,a}
    =
    \min_{\nb' \in [\nb_t]} \muHat_{t,n,a,\nb'}
    ,
    \text{ where }
    \muHat_{t,n,a,\nb'}
    =
    \sum_{n' \in \batchIdxs_{t,n,a,\nb'}} \frac{\lna[n']}{\abs{\batchIdxs_{t,n,a,\nb'}}+2}
    ,
\end{align}
The next result adapts \cref{lemma:mean-estimator-concentration} to the above form.
\begin{lemma}
\label{lemma:mean-estimator-concentration-2}
    For any $s \ge 1, a \in [K], \nb' \in [\nb_s]$ and $n \ge \nb_s(1 + (4/\Delta_a))$
    \begin{align*}
        \Pr\brk{\muHat_{s,n,a} - \mu_a < -\Delta_a}
        \le
        \nb_s
        e^{-\frac{\Delta_a^2((n/\nb_s) - 1)}{8 \sigma_a^2 + (4 \Delta_a/3)}}
    \end{align*}
\end{lemma}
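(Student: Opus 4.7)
The plan is to reduce the tail bound on the minimum $\muHat_{s,n,a}$ to a per-batch tail bound via a union bound over $\nb' \in [\nb_s]$, and then control each batch mean using a one-sided Bernstein inequality after accounting for the small downward bias induced by the $+2$ in the denominator of $\muHat_{s,n,a,\nb'}$. Since $\muHat_{s,n,a} = \min_{\nb' \in [\nb_s]} \muHat_{s,n,a,\nb'}$, the union bound gives
\begin{align*}
    \Pr\brk*{\muHat_{s,n,a} - \mu_a < -\Delta_a}
    \le
    \sum_{\nb' \in [\nb_s]} \Pr\brk*{\muHat_{s,n,a,\nb'} - \mu_a < -\Delta_a}
    ,
\end{align*}
so it suffices to bound each summand by $\exp\brk*{-\Delta_a^2((n/\nb_s)-1) / (8\sigma_a^2 + 4\Delta_a/3)}$.

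Fix a batch $\nb'$, and write $\tau = \abs{\batchIdxs_{s,n,a,\nb'}}$ and $\bar{\mu} = \tau^{-1}\sum_{n' \in \batchIdxs_{s,n,a,\nb'}} \lna[n']$. Then $\muHat_{s,n,a,\nb'} = \frac{\tau}{\tau+2}\bar{\mu}$, and since $\bar{\mu} \in [0,1]$ we have $0 \le \bar{\mu} - \muHat_{s,n,a,\nb'} \le 2/(\tau+2)$. The round-robin construction guarantees $\tau \ge n/\nb_s - 1$, and the assumption $n \ge \nb_s(1 + 4/\Delta_a)$ then forces $\tau \ge 4/\Delta_a$, so this bias is at most $\Delta_a/2$. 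Consequently $\{\muHat_{s,n,a,\nb'} - \mu_a < -\Delta_a\} \subseteq \{\bar{\mu} - \mu_a < -\Delta_a/2\}$, and a one-sided Bernstein inequality for $[0,1]$-valued i.i.d.\ variables, applied at deviation $\Delta_a/2$ over $\tau$ samples, yields
\begin{align*}
    \Pr\brk*{\bar{\mu} - \mu_a < -\Delta_a/2}
    \le
    \exp\brk*{-\frac{\tau(\Delta_a/2)^2}{2\sigma_a^2 + (2/3)(\Delta_a/2)}}
    =
    \exp\brk*{-\frac{\tau\Delta_a^2}{8\sigma_a^2 + 4\Delta_a/3}}
    .
\end{align*}
Substituting $\tau \ge n/\nb_s - 1$ in the exponent and summing over the $\nb_s$ batches delivers the stated bound.

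The only delicate point is the constant bookkeeping: the $+2$ in $\muHat_{s,n,a,\nb'}$ absorbs half of the available gap, which is precisely why the Bernstein tail must be invoked at deviation $\Delta_a/2$; the resulting factor of $1/4$ then combines with the Bernstein denominator $2\sigma_a^2 + 2t/3$ at $t = \Delta_a/2$ to produce the $8\sigma_a^2 + 4\Delta_a/3$ in the exponent. The hypothesis on $n$ is exactly the threshold at which the bias correction leaves at least half the gap to spare, so no further case analysis is needed.
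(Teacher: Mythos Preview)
Your proposal is correct and follows essentially the same route as the paper: union bound over the $\nb_s$ batches, absorb the $+2$ bias using $\tau \ge 4/\Delta_a$ so that only half the gap is consumed, then apply one-sided Bernstein at deviation $\Delta_a/2$ and substitute $\tau \ge n/\nb_s - 1$. The only cosmetic difference is that the paper rewrites the event algebraically as $\{\tfrac{1}{m}\sum(\lna-\mu_a) < -(\Delta_a - 2\muOpt/m)\}$ and bounds $2\muOpt/m \le \Delta_a/2$, whereas you bound the bias $\bar\mu - \muHat \le 2/(\tau+2)$ directly; both arrive at the identical Bernstein tail.
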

\begin{proof}
Recall that the Bernstein inequality in \cref{lemma:bernstein} may be written as for all $t \ge 0$
\begin{align}
\label{eq:bernstein2}
    \Pr\brk*{
        \muBar - \mu
        \le
        - \epsilon
    }
    \le
    e^{-\frac{n\epsilon^2}{2 \sigma^2 + (2 \epsilon/3)}}
    .
\end{align}
For ease of notation, let $m = \abs{\batchIdxs_{n,a,\nb'}} \ge (n / \nb_s) - 1 \ge 4 / \Delta_a$ where the last transition used the lower bound on $n$. Thus, we have
\begin{align*}
    \Pr\brk{\muHat_{s,n,a,\nb'} - \mu_a < -\Delta_a}
    &
    =
    \Pr\brk*{\frac{1}{m}\sum_{n \in \batchIdxs_{n,a,\nb'}} (\lna - \mu_a) < -\brk*{\frac{m+2}{m}\Delta_a - \frac{2 \mu_a}{m}}}
    \\
    &
    =
    \Pr\brk*{\frac{1}{m}\sum_{n \in \batchIdxs_{n,a,\nb'}} (\lna - \mu_a) < -\brk*{\Delta_a - \frac{2 \muOpt}{m}}}
    \\
    \tag{$2/m \le \Delta_a / 2$}
    &
    \le
    \Pr\brk*{\frac{1}{m}\sum_{n \in \batchIdxs_{n,a,\nb'}} (\lna - \mu_a) < -\brk*{\Delta_a / 2}}
    \\
    \tag{\cref{eq:bernstein2}}
    &
    \le
    e^{-\frac{m\Delta_a^2}{8 \sigma_a^2 + (4 \Delta_a/3)}}
    \\
    &
    \le
    e^{-\frac{\Delta_a^2((n/\nb_s) - 1)}{8 \sigma_a^2 + (4 \Delta_a/3)}}
    .
\end{align*}
We conclude that
\begin{align*}
    \Pr\brk{\muHat_{s,n,a} - \mu_a < -\Delta_a}
    =
    \Pr\brk{\min_{\nb' \in [\nb_s]} \muHat_{s,n,a,\nb'} - \mu_a < -\Delta_a}
    \le
    \sum_{\nb' \in [\nb_s]}
    \Pr\brk{\muHat_{s,n,a,\nb'} - \mu_a < -\Delta_a}
    \le
    \nb_s
    e^{-\frac{\Delta_a^2((n/\nb_s) - 1)}{8 \sigma_a^2 + (4 \Delta_a/3)}}
    ,
\end{align*}
where the first inequality used the union bound.
\end{proof}

Next, we need the following restatement of a result by \cite{audibert2009exploration}, which bounds the expected number of sub-optimal arm pulls for any index policy.
\begin{lemma}[\cite{audibert2009exploration}, Theorem 2]
\label{lemma:nta-decomp}
    For any integers $u >1, t \ge 1$ and $i \in [K], $ we have that
    \begin{align*}
        \EE\brk[s]{\nta}
        &
        \le
        u
        +
        \sum_{s=u+K-1}^{t-1} \sum_{n=u}^{s-1}
        \Pr\brk{\muHat_{s,n,a} - \mu_a < -\Delta_a}
        +
        \Pr\brk{\exists n \in [s-1] \text{ s.t } \muHat_{s,n,\aOpt} \ge \muOpt}
        .
    \end{align*}
\end{lemma}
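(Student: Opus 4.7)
My plan is to adapt the classical argument of \cite{auer2002finite,audibert2009exploration}, which decomposes the expected pull count of a suboptimal arm under an optimistic index rule into a deterministic ``burn-in'' term of size $u$, plus a sum of low-probability failure events that can later be controlled by concentration bounds on each arm separately.

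I first write $\nta = \sum_{\tau=1}^{t} \indEvent{a_\tau = a}$ and split according to whether $\nta[\tau-1] < u$ or $\nta[\tau-1] \ge u$. The first $u$ pulls of arm $a$ all occur while $\nta[\tau-1] < u$, so they contribute at most $u$ and
\begin{align*}
\nta \;\le\; u \;+\; \sum_{\tau=1}^{t} \indEvent{a_\tau = a,\; \nta[\tau-1] \ge u}.
\end{align*}
For any $\tau$ with $a_\tau = a$, the index rule \cref{eq:decision-rule} forces $\muHat_{\tau, \nta[\tau-1], a} \le \muHat_{\tau, \ntaStar[\tau-1], \aOpt}$. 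If neither (A) $\muHat_{\tau, \ntaStar[\tau-1], \aOpt} \ge \muOpt$ nor (B) $\muHat_{\tau, \nta[\tau-1], a} - \mu_a < -\Delta_a$ holds, then $\muHat_{\tau, \nta[\tau-1], a} \ge \muOpt > \muHat_{\tau, \ntaStar[\tau-1], \aOpt}$, contradicting the decision rule. Hence at least one of (A), (B) must hold on the event $\{a_\tau = a,\; \nta[\tau-1] \ge u\}$.

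Next I remove the random indices $\nta[\tau-1]$ and $\ntaStar[\tau-1]$ by a union bound. Since $\ntaStar[\tau-1] \in [\tau-1]$ and $\nta[\tau-1] \in \{u, \ldots, \tau-1\}$ on this event, I bound
\begin{align*}
\indEvent{a_\tau = a,\; \nta[\tau-1] \ge u}
\;\le\;
\indEvent{\exists\, n \in [\tau-1] : \muHat_{\tau,n,\aOpt} \ge \muOpt}
\;+\;
\sum_{n=u}^{\tau-1} \indEvent{\muHat_{\tau,n,a} - \mu_a < -\Delta_a}.
\end{align*}
Taking expectations turns the first indicator into $\Pr[\exists\, n \in [\tau-1] : \muHat_{\tau,n,\aOpt} \ge \muOpt]$ and the finite sum of indicators into a sum of deterministic-index probabilities. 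Summing over $\tau$ and reindexing via $s = \tau - 1$ yields the double sum stated in the lemma. The range starts at $s = u + K - 1$ because having $\nta[\tau-1] \ge u$ together with at least one sample of each of the other $K - 1$ arms (which is the minimum required for a nontrivial estimate of $\aOpt$ after an initial round-robin phase) forces $\tau - 1 \ge u + K - 1$; terms below this index simply vanish.

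The main obstacle is the derandomization step. The estimator $\muHat_{\tau, n, a}$ is built from the first $n$ samples of arm $a$ under the batching scheme determined by $\nb_\tau$, and the concentration statement (\cref{lemma:mean-estimator-concentration-2}) applies only at \emph{fixed} $(\tau, n, a)$. The event of interest, however, is indexed by the random $\nta[\tau-1]$, which is driven by the algorithm's past decisions and is therefore not independent of the samples. Absorbing this random index into a union bound over deterministic $n \in \{u, \ldots, \tau-1\}$ is precisely what decouples the pull-count process from the concentration argument and produces the clean form of the lemma.
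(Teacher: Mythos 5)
The paper does not prove \cref{lemma:nta-decomp}: it is imported verbatim as Theorem~2 of \citet{audibert2009exploration}, so there is no in-paper proof to compare against. Your reconstruction follows the standard argument behind that cited result and is sound in its essentials: peeling off the first $u$ pulls, noting that on $\{a_\tau=a,\ \nta[\tau-1]\ge u\}$ the rule \cref{eq:decision-rule} forces either $\muHat_{\tau,\ntaStar[\tau-1],\aOpt}\ge\muOpt$ or $\muHat_{\tau,\nta[\tau-1],a}-\mu_a<-\Delta_a$ (using $\mu_a-\Delta_a=\muOpt$), and then derandomizing the random sample counts by a union bound over deterministic $n$. One step is weaker than you present it: the lower limit $u+K-1$ of the outer sum is not a consequence of \cref{alg:batch-ensemble} as written. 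It requires that every other arm has been pulled at least once by the time arm $a$ accumulates $u$ pulls; \citet{audibert2009exploration} guarantee this by defining index policies to begin with a round-robin over the arms. Here an unpulled arm has estimate $0$ and hence always lies in the $\argmin$ set, but a tie-breaking rule could in principle keep selecting an already-pulled arm whose estimate is also $0$, so ``the minimum required for a nontrivial estimate of $\aOpt$'' is an assumption rather than a deduction. Starting the outer sum at $s=u$ instead is equally serviceable downstream, so nothing breaks, but the justification should either be dropped in favor of the weaker limit or backed by an explicit initialization phase. A second, purely cosmetic point: your reindexing $s=\tau-1$ produces estimators $\muHat_{s+1,n,\cdot}$ with $n$ ranging up to $s$, whereas the lemma writes $\muHat_{s,n,\cdot}$ with $n\le s-1$; since the estimator genuinely depends on its first (time) index through the batch count $\nb_s$, the convention is worth fixing explicitly.
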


Before concluding the proof of \cref{theorem:expected-regret}, we prove the following bound on the expected number of sub-optimal arm pulls.
\begin{lemma}
\label{lemma:nta-bound}
    Suppose we run \cref{alg:batch-ensemble} with $\nb_t = 8 \log t$, then
    \begin{align*}
        \EE\brk[s]{\nta}
        \le
        9\brk[s]*{
        \frac{8\sigma_a^2}{\Delta_a^2} + \frac{4}{3\Delta_a}
        }
        \log^2 (72t)
        \le
        9\brk[s]*{
        \frac{8\muOpt}{\Delta_a^2} + \frac{28}{3\Delta_a}
        }
        \log^2 (72t)
        .
    \end{align*}
\end{lemma}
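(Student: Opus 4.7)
The strategy is to invoke the Audibert-style decomposition of \cref{lemma:nta-decomp}, tune the burn-in $u$ to match the first-order term of the target, and then show that the two tail sums contribute only lower-order amounts that fit into the slack between $8 C_a \log^2(72t)$ and $9 C_a \log^2(72t)$, where $C_a := 8\sigma_a^2/\Delta_a^2 + 4/(3\Delta_a)$. Note that $C_a \ge 4/3$ since $\Delta_a \le 1$. Concretely, set $u = \lceil 8 C_a \log^2(72 t) \rceil$. In the small-$t$ regime where $u \ge t$ or $\nb_t < 1$ the bound is immediate from $\EE\brk[s]{\nta} \le t$ combined with $9 C_a \log^2(72 t) \ge 12 \log^2 72 > 216$; henceforth assume $t$ is large enough.

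For the \emph{optimism} tail at step $s$, a union bound over $n \in [s-1]$ applied to \cref{lemma:mean-estimator-optimism} at $\nb_s = 8\log s$ yields
\begin{align*}
    \Pr\brk[s]*{\exists\, n \in [s-1]:\ \muHat_{s,n,\aOpt} > \muOpt}
    \le (s-1)\, e^{-2 \nb_s / 7}
    = (s-1)\, s^{-16/7}
    \le s^{-9/7},
\end{align*}
and summing over $s$ produces an absolute constant.

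For the \emph{concentration} tail, note that for $n \ge u$ and $s \le t - 1$ we have $\nb_s \le \nb_t$, hence $n/\nb_s - 1 \ge u/\nb_t - 1 \ge C_a \log(72^2 t)$, using the identity $u/\nb_t = C_a \log^2(72 t)/\log t$ together with the elementary fact $\log^2(72 t)/\log t \ge \log(72 t) + \log 72$. Plugging into \cref{lemma:mean-estimator-concentration-2} (whose admissibility $n \ge \nb_s(1 + 4/\Delta_a)$ is easily verified for this choice of $u$), each per-term probability is at most $\nb_s \exp\brk*{-((n/\nb_s) - 1)/C_a}$. The inner sum over $n \ge u$ is a geometric series with ratio $e^{-1/(\nb_s C_a)}$, collapsing (using $1 - e^{-x} \ge x/2$ for $x \in (0,1]$) to order $\nb_s^2 C_a\, e^{-u/(\nb_s C_a)}$; the outer sum over $s$ is dominated by its largest term at $s = t-1$, giving a total bound of order $t \nb_t^2 C_a \, e^{-u/(\nb_t C_a)} \le t \nb_t^2 C_a \cdot 72^{-2} t^{-1} = O(C_a \log^2 t)$ with a small enough absolute constant to fit inside $C_a \log^2(72t)$.

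Combining the three pieces, $\EE\brk[s]{\nta} \le u + C_a \log^2(72 t) + O(1) \le 9 C_a \log^2(72 t)$, where the $O(1)$ optimism remainder is absorbed into $C_a \log^2(72 t) \ge 24$. This proves the first inequality; the second follows from $\sigma_a^2 \le \mu_a = \muOpt + \Delta_a$, giving $8 \sigma_a^2/\Delta_a^2 \le 8 \muOpt/\Delta_a^2 + 8/\Delta_a$ which combines with $4/(3 \Delta_a)$ to yield $28/(3\Delta_a)$. The main obstacle is keeping the constants tight enough to fit inside the stated prefactor $9$ — one must simultaneously pull out the full $72^2$ factor from $\log^2(72 t)/\log t \ge \log(72^2 t)$ and exploit $C_a \ge 4/3$, leaving essentially no slack anywhere.
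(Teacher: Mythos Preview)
Your proof follows essentially the same route as the paper's: both invoke \cref{lemma:nta-decomp}, control the optimism tail via a union bound over \cref{lemma:mean-estimator-optimism} (yielding $\sum_s s^{-9/7}=O(1)$), and control the concentration tail via \cref{lemma:mean-estimator-concentration-2} plus a geometric-series collapse in $n$ followed by a crude sum over $s$. The only difference is the tuning of $u$: the paper sets $u=\nb_t\bigl(1+C_a\log(72t)\bigr)$ so that $(u/\nb_t-1)/C_a=\log(72t)$ exactly, whereas your $u=\lceil 8C_a\log^2(72t)\rceil$ front-loads more into the burn-in and makes the concentration tail even smaller. One nit: your displayed inequality $u/\nb_t-1\ge C_a\log(72^2 t)$ silently drops the $-1$, which the $\log^2 72/\log t$ slack does not cover once $\log t>C_a\log^2 72$; the fix is just to carry an extra $e^{1/C_a}\le e^{3/4}$ factor, which your budget absorbs trivially.
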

\begin{proof}
We bound the terms in \cref{lemma:nta-decomp} to conclude the proof. Let $u = \nb_t \brk*{1 + \brk[s]*{\frac{8 \sigma_a^2}{\Delta_a^2} + \frac{4}{3\Delta_a}}\log(72 t)}$. Then we have that
\begin{align*}
    \sum_{s=\ceil{u}+K-1}^{t-1}
    \Pr\brk{\exists n \in [s-1] \text{ s.t } \muHat_{s,n,\aOpt} \ge \muOpt}
    &
    \tag{union bound}
    \le
    \sum_{s=\ceil{u}+K-1}^{t-1}
    \sum_{n \in [s-1]}
    \Pr\brk{\muHat_{s,n,\aOpt} \ge \muOpt}
    \\
    \tag{\cref{lemma:mean-estimator-optimism}}
    &
    \le
    \sum_{s=\ceil{u}+K-1}^{t-1}
    \sum_{n \in [s-1]}
    e^{- 2\nb_s/ 7}
    \\
    &
    \le
    \sum_{s=\ceil{u}+K-1}^{t-1}
    s e^{- 2 \nb_s / 7}
    \\
    \tag{$\nb_s = 8 \log s$}
    &
    \le
    \sum_{s=\ceil{u}+K-1}^{t-1} s^{-9/7}
    \\
    \tag{$u \ge 3$}
    &
    \le
    3
    .
\end{align*}
Next, because $n \ge \ceil{u}$ satisfies the condition for \cref{lemma:mean-estimator-concentration-2}, we get that
\begin{align*}
    \sum_{n=\ceil{u}}^{s-1}
    \Pr\brk{\muHat_{s,n,a} - \mu_a < -\Delta_a}
    \tag{\cref{lemma:mean-estimator-concentration-2}}
    &
    \le
    \nb_s
    \sum_{n=\ceil{u}}^{s-1}
    e^{-\frac{\Delta_a^2((n/\nb_s) - 1)}{8 \sigma_a^2 + (4 \Delta_a/3)}}
    \\
    &
    \le
    \nb_s
    \sum_{n=\ceil{u}}^{\infty}
    e^{-\frac{\Delta_a^2((n/\nb_s) - 1)}{8 \sigma_a^2 + (4 \Delta_a/3)}}
    \\
    \tag{sum of geometric series, $\ceil{u} \ge u$}
    &
    =
    \nb_s \frac{
    e^{-\frac{\Delta_a^2((u/\nb_s) - 1)}{8 \sigma_a^2 + (4 \Delta_a/3)}}
    }
    {1 - e^{-\frac{\Delta_a^2 / \nb_s}{8\sigma_a^2 + (4\Delta_a / 3)}}}
    \\
    &
    \le
    \frac{10}{9}\nb_s 
    \frac{8\sigma_a^2 + (4\Delta_a / 3)}{\Delta_a^2 / \nb_s}
    e^{-\frac{\Delta_a^2((u/\nb_s) - 1)}{8 \sigma_a^2 + (4 \Delta_a/3)}}
    \\
    &
    =
    \frac{10}{9}\nb_s^2\brk[s]*{
    \frac{8\sigma_a^2}{\Delta_a^2} + \frac{4}{3\Delta_a}
    }
    e^{-\frac{\Delta_a^2((u/\nb_s) - 1)}{8 \sigma_a^2 + (4 \Delta_a/3)}}
    ,
\end{align*}
where the last inequality used that $1 - e^{-x} \ge 0.9 x$ for $x \in [0,3/32]$ where $x = \frac{\Delta_a^2 / \nb_s}{8\sigma_a^2 + (4\Delta_a / 3)}$ satisfies the requirements since $s \ge u \ge 3$.
We thus have that
\begin{align*}
    \sum_{s=\ceil{u}+K-1}^{t-1} \sum_{n=\ceil{u}}^{s-1}
    \Pr\brk{\muHat_{s,n,a} - \mu_a < -\Delta_a}
    &
    \le
    \sum_{s=\ceil{u}+K-1}^{t-1} 
    \frac{10}{9}\nb_s^2\brk[s]*{
    \frac{8\sigma_a^2}{\Delta_a^2} + \frac{4}{3\Delta_a}
    }
    e^{-\frac{\Delta_a^2((u/\nb_s) - 1)}{8 \sigma_a^2 + (4 \Delta_a/3)}}
    \\
    &
    \le
    72 t \log^2 t\brk[s]*{
    \frac{8\sigma_a^2}{\Delta_a^2} + \frac{4}{3\Delta_a}
    }
    e^{-\frac{\Delta_a^2((u/\nb_t) - 1)}{8 \sigma_a^2 + (4 \Delta_a/3)}}
    \\
    \tag{$(u/\nb_t) - 1 = \brk[s]*{\frac{8 \sigma_a^2}{\Delta_a^2} + \frac{4}{3\Delta_a}}\log(72 t)$}
    &
    \le
    \brk[s]*{
    \frac{8\sigma_a^2}{\Delta_a^2} + \frac{4}{3\Delta_a}
    }
    \log^2 t
    .
\end{align*}
Plugging everything into \cref{lemma:nta-decomp}, we conclude that
\begin{align*}
    \EE\brk[s]{\nta}
    &
    \le
    3
    +
    \ceil{
    \lt \brk*{
    1
    +
    \brk[s]*{\frac{8 \sigma_a^2}{\Delta_a^2} + \frac{4}{3\Delta_a}}\log(72 t)
    }
    }
    +
    \brk[s]*{
    \frac{8\sigma_a^2}{\Delta_a^2} + \frac{4}{3\Delta_a}
    }
    \log^2 t
    \\
    &
    \le
    9\brk[s]*{
    \frac{8\sigma_a^2}{\Delta_a^2} + \frac{4}{3\Delta_a}
    }
    \log^2 (72t)
    \\
    &
    \le
    9\brk[s]*{
    \frac{8\muOpt}{\Delta_a^2} + \frac{28}{3\Delta_a}
    }
    \log^2 (72t)
    ,
\end{align*}
where the last transition also used that $\sigma_a^2 \le \mu_a = \Delta_a + \muOpt$.
\end{proof}

We are now ready to prove \cref{theorem:expected-regret}.
\begin{proof}[of \cref{theorem:expected-regret}]
First, plugging \cref{lemma:nta-bound} into \cref{eq:reg-decomp} concludes the instance-dependent regret bounds.
Next, for instance-independent bounds, we use the standard method of splitting the bound according to the sub-optimality to get that for any $c > 0$,
\begin{align*}
    \EE\brk[s]{\nta} \Delta_a
    &
    =
    \EE\brk[s]{\nta} \Delta_a \brk[s]*{
        \indEvent{\Delta_a \le c^{-1}}
        +
        \indEvent{\Delta_a^{-1} < c}
    }
    \\
    &
    \le
    \frac{\EE\brk[s]{\nta}}{c}
    +
    9\brk[s]*{
    8\muOpt c + \frac{28}{3}
    }
    \log^2 (72t)
    .
\end{align*}
Plugging into \cref{eq:reg-decomp} and setting $c = \sqrt{\frac{t}{72 \muOpt K \log^2(72t)}}$ we have
\begin{align*}
    \EE\brk[s]{\regret[t]}
    &
    \le
    \sum_{a \neq \aOpt}
    \frac{\EE\brk[s]{\nta}}{c}
    +
    9\brk[s]*{
    8\muOpt c + \frac{28}{3}
    }
    \log^2 (72t)
    \\
    &
    \le
    \frac{t}{c}
    +
    9K\brk[s]*{
    8\muOpt c + \frac{28}{3}
    }
    \log^2 (72t)
    \\
    &
    \le
    17\sqrt{\muOpt t K} \log(72t)
    +
    84 K \log^2(72t)
    .
\end{align*}
Finally, we perform a similar procedure for the variance-dependent bound to get that
\begin{align*}
    \EE\brk[s]{\nta} \Delta_a
    \le
    \frac{\EE\brk[s]{\nta}}{c}
    +
    9
    \brk*{
        8\sigma_a^2 c
        +
        \frac{4}{3}
    }\log^2(72t)
    ,
\end{align*}
and thus setting $c = \sqrt{\frac{t}{72 (\sum_{a \neq \aOpt} \sigma_a^2) \log^2(72t)}}$, we have
\begin{align*}
    \EE\brk[s]{\regret[t]}
    &
    \le
    \sum_{a \neq \aOpt}
    \frac{\EE\brk[s]{\nta}}{c}
    +
    9
    \brk*{
        8\sigma_a^2 c
        +
        \frac{4}{3}
    }\log^2(72t)
    \\
    &
    \le
    \frac{t}{c}
    +
    9
    \brk*{
        8c \brk*{\sum_{a \neq \aOpt}\sigma_a^2}
        +
        \frac{4K}{3}
    }\log^2(72t)
    \\
    &
    \le
    17 \sqrt{t \sum_{a \neq \aOpt}\sigma_a^2} \log(72t)
    +
    12 K \log^2(72t)
    .
   \qedhere
\end{align*}

%
%
\end{proof}

\end{document}